\documentclass{article}
\usepackage{arxiv}
\usepackage{microtype}
\usepackage[utf8]{inputenc} 
\usepackage[T1]{fontenc}    
\usepackage{nicefrac}
\usepackage{lipsum}
\usepackage{graphicx}
\usepackage{subfigure}
\usepackage{amsfonts}
\usepackage{commath}
\usepackage{amsmath}
\allowdisplaybreaks
\usepackage{dirtytalk}
\usepackage{amsthm}
\usepackage{mathtools}
\usepackage[shortlabels]{enumitem}

\usepackage{natbib}

\DeclareMathOperator*{\argmin}{arg\,min}

\newtheorem{theorem}{Theorem}[section]
\newtheorem{lemma}[theorem]{Lemma}
\newtheorem{proposition}[theorem]{Proposition}

\newtheorem{assumption}{Assumption}
\newtheorem{remark}[theorem]{Remark}
\theoremstyle{definition}
\newtheorem{definition}[theorem]{Definition}

\usepackage{tikz-cd}
\usepackage{booktabs} 
\usepackage{hyperref}

\title{Regularised Least-Squares Regression with Infinite-Dimensional Output Space}
\author{Junhyung Park\thanks{Corresponding author: \texttt{junhyung.park@tuebingen.mpg.de}}\\
	Max Planck Institute for Intelligent Systems\\
	T\"ubingen, Germany\\
	\And
	Krikamol Muandet\\
	Max Planck Institute for Intelligent Systems\\
	T\"ubingen, Germany}
\date{}
\begin{document}
\maketitle
\begin{abstract}
This short technical report presents some learning theory results on vector-valued reproducing kernel Hilbert space (RKHS) regression, where the input space is allowed to be non-compact and the output space is a (possibly infinite-dimensional) Hilbert space. Our approach is based on the integral operator technique using spectral theory for non-compact operators. We place a particular emphasis on obtaining results with as few assumptions as possible; as such we only use Chebyshev's inequality, and no effort is made to obtain the best rates or constants. 
\end{abstract}
\section{Introduction}\label{Sintro}
Much (but not all) of the work in the learning theory of kernel ridge regression with regularised least-squares risk has been on a compact input space and a real output space. In this work, we extend, where possible, the results to the case where the input space is not restricted to be compact and the output space is a (possibly infinite dimensional) Hilbert space, for which there has been an increase in interest recently. In this case, the RKHS in which the regression is carried out is a vector-valued RKHS with an operator-valued kernel \cite{micchelli2005learning,carmeli2006vector,carmeli2010vector}. 

We focus on obtaining universal consistency of vector-valued RKHS regression with minimal assumptions on the learning problem. While we also provide uniform rates for the well-specified case, no further effort is made to find assumptions under which we can obtain best rates or best constants. Our approach is based on the integral operator technique, but in order to account for the fact that the output space is an infinite-dimensional vector space, we leverage spectral theory of non-compact operators \citep[Chapter 7]{hall2013quantum}. 

\section{Notations, Background and Problem Set-Up}\label{Snotations}
Let us take \((\Omega,\mathcal{F},P)\) as the underlying probability space. Suppose \((\mathcal{X},\mathfrak{X})\) is a separable measurable space, and that \(\mathcal{Y}\) is a (potentially infinite-dimensional) separable Hilbert space with associated inner product and norm denoted by \(\langle\cdot,\cdot\rangle_\mathcal{Y}\) and \(\lVert\cdot\rVert_\mathcal{Y}\). Denote the Borel \(\sigma\)-algebra of \(\mathcal{Y}\) as \(\mathfrak{Y}\). Suppose \(X:\Omega\rightarrow\mathcal{X}\) and \(Y:\Omega\rightarrow\mathcal{Y}\) are random variables, with distributions \(P_X(A)=P(X^{-1}(A))\) for \(A\in\mathfrak{X}\) and \(P_Y(B)=P(Y^{-1}(B))\) for \(B\in\mathfrak{Y}\). Further, we denote by \(P_{XY}\) the joint distribution of \(X\) and \(Y\). In order for regression of \(Y\) on \(X\) to be possible, the following assumption that \(Y\) has finite variance is a minimal requirement:
\begin{assumption}\label{Afinitevariance}
	We have \(\mathbb{E}\left[\left\lVert Y\right\rVert^2_\mathcal{Y}\right]<\infty\). 
\end{assumption}
Assumption \ref{Afinitevariance} also implies that \(\mathbb{E}[\lVert Y\rVert_\mathcal{Y}]<\infty\), which means that \(Y\) is Bochner-integrable \citep[p.15, Definition 35]{dinculeanu2000vector}. Hence, we can define its conditional expectation \(\mathbb{E}[Y\mid X]\) as an \(X\)-measurable, Bochner-\(P_X\)-integrable random variable taking values in \(\mathcal{Y}\), according to \citet[p.45, Definition 38]{dinculeanu2000vector}. In the rest of this paper, we let \(\mathbb{E}[Y\mid X]\) be any particular version thereof, and talk about \textit{the} conditional expectation of \(Y\) given \(X\). Since \(\mathbb{E}\left[Y\mid X\right]\) is an \(X\)-measurable random variable, we can write
\begin{equation}\label{EdefinitionF}
	\mathbb{E}[Y\mid X]=f^*(X).
\end{equation}
for some deterministic measurable function \(f^*:\mathcal{X}\rightarrow\mathcal{Y}\). It is this function \(f^*\) that we aim to estimate via regression. 

Denote by \(L^2(\mathcal{X},P_X;\mathcal{Y})\) the Bochner space with output in \(\mathcal{Y}\), i.e. the Hilbert space of (equivalence classes of) measurable functions \(f:\mathcal{X}\rightarrow\mathcal{Y}\) such that \(\lVert f(\cdot)\rVert^2_\mathcal{Y}\) is \(P_X\)-integrable, with inner product \(\langle f_1,f_2\rangle_2=\mathbb{E}[\langle f_1(X),f_2(X)\rangle_\mathcal{Y}]\). Denote its corresponding norm by \(\lVert\cdot\rVert_2\). Then by Jensen's inequality and Assumption \ref{Afinitevariance}, we have \(f^*\in L^2(\mathcal{X},P_X;\mathcal{Y})\):
\[\mathbb{E}\left[\left\lVert f^*(X)\right\rVert^2_\mathcal{Y}\right]=\mathbb{E}\left[\left\lVert\mathbb{E}\left[Y\mid X\right]\right\rVert^2_\mathcal{Y}\right]\leq\mathbb{E}\left[\mathbb{E}\left[\left\lVert Y\right\rVert^2_\mathcal{Y}\mid X\right]\right]=\mathbb{E}\left[\left\lVert Y\right\rVert^2_\mathcal{Y}\right]<\infty.\]

\subsection{Vector-Valued Reproducing Kernel Hilbert Spaces}\label{SSvvrkhs}
In this report, regression for \(f^*\in L^2(\mathcal{X},P_X;\mathcal{Y})\) will be carried out in a \textit{fixed} vector-valued reproducing kernel Hilbert space, the well-known theory of which we briefly review here. 

Suppose that \(\mathcal{H}\) is a Hilbert space of functions \(\mathcal{X}\rightarrow\mathcal{Y}\), with inner product and norm denoted by \(\langle\cdot,\cdot\rangle_\mathcal{H}\) and \(\lVert\cdot\rVert_\mathcal{H}\) respectively. For any \(n\in\mathbb{N}\), we denote by \(\mathcal{X}^n\) and \(\mathcal{Y}^n\) the \(n\)-fold direct sums of \(\mathcal{X}\) and \(\mathcal{Y}\) respectively; in particular, \(\mathcal{Y}^n\) is a Hilbert space with inner product \(\langle(y_1,...,y_n)^T,(y'_1,...,y'_n)^T\rangle_{\mathcal{Y}^n}=\sum^n_{i=1}\langle y_i,y'_i\rangle_\mathcal{Y}\). For any \(\mathbf{x}=(x_1,...,x_n)^T\in\mathcal{X}^n\), we define the \textit{evaluation operator} (or \textit{sampling operator}) by
\begin{alignat*}{2}
	S_\mathbf{x}:&\mathcal{H}\rightarrow\mathcal{Y}^n\\
	&f\mapsto\frac{1}{n}\left(f\left(x_1\right),...,f\left(x_n\right)\right)^T.
\end{alignat*}
Then \(\mathcal{H}\) is a \textit{vector-valued reproducing kernel Hilbert space} (vvRKHS) if the evaluation map \(S_x:\mathcal{H}\rightarrow\mathcal{Y}\) is continuous for all \(x\in\mathcal{X}\) \citep[Definition 2.1]{carmeli2006vector}. This immediately implies that \(S_\mathbf{x}:\mathcal{H}\rightarrow\mathcal{Y}^n\) is continuous for all \(n\in\mathbb{N}\) and \(\mathbf{x}\in\mathcal{X}^n\). We define the \textit{operator-valued kernel} \(K:\mathcal{X}\times\mathcal{X}\rightarrow\mathcal{L}(\mathcal{Y})\), where \(\mathcal{L}(\mathcal{Y})\) is the Banach space of continuous linear operators from \(\mathcal{Y}\) to itself, by
\[K\left(x,x'\right)(y)=S_xS_{x'}^*y,\qquad\text{i.e.}\qquad K\left(\cdot,x'\right)(y)=S_{x'}^*(y).\]
Then we can easily deduce the \textit{reproducing property}. For any \(x\in\mathcal{X}\) and \(y\in\mathcal{Y}\),
\[\left\langle y,f(x)\right\rangle_\mathcal{Y}=\left\langle y,S_x(f)\right\rangle_\mathcal{Y}=\left\langle S_x^*(y),f\right\rangle_\mathcal{H}=\left\langle K(\cdot,x)(y),f\right\rangle_\mathcal{H}.\]
For arbitrary \(n\in\mathbb{N}\) and \(\mathbf{x}=(x_1,...,x_n)^T\in\mathcal{X}^n\),
the adjoint of the sampling operator, \(S^*_\mathbf{x}:\mathcal{Y}^n\rightarrow\mathcal{H}\), is given by
\[S^*_\mathbf{x}\mathbf{y}=\frac{1}{n}\sum^n_{i=1}K(x_i,\cdot)y_i,\qquad\text{for }\mathbf{y}=(y_1,...,y_n)^T,y_i\in\mathcal{Y},\]
since, by the reproducing property, for any \(f\in\mathcal{H}\) and \(\mathbf{y}\in\mathcal{Y}^n\),
\[\left\langle S_\mathbf{x}f,\mathbf{y}\right\rangle_{\mathcal{Y}^n}=\frac{1}{n}\sum^n_{i=1}\left\langle f\left(x_i\right),y_i\right\rangle_\mathcal{Y}=\frac{1}{n}\sum^n_{i=1}\left\langle f,K\left(x_i,\cdot\right)y_i\right\rangle_\mathcal{H}=\left\langle f,\frac{1}{n}\sum^n_{i=1}K\left(x_i,\cdot\right)y_i\right\rangle_\mathcal{H}.\]
\begin{assumption}\label{Aseparablebounded}
	We henceforth assume that \(\mathcal{H}\) is separable, and that the kernel \(K\) is bounded:
	\[\sup_{x\in\mathcal{X}}\left\lVert K(x,x)\right\rVert_\textnormal{op}=\sup_{x\in\mathcal{X}}\sup_{y\in\mathcal{Y},\left\lVert y\right\rVert_\mathcal{Y}\leq1}\left\lVert K(x,x)(y)\right\rVert_\mathcal{Y}<B,\qquad\text{for some }B>0.\]
\end{assumption}
For a fixed \(f\in\mathcal{H}\), Assumption \ref{Aseparablebounded} allows us to bound \(\lVert f(\cdot)\rVert_\mathcal{Y}\) uniformly over \(\mathcal{X}\), and hence the operator norm of \(S_\mathbf{x}\) uniformly over \(\mathcal{X}^n\). 
\begin{lemma}\label{Lbound}
	Suppose Assumption \ref{Aseparablebounded} holds. Then
	\begin{enumerate}[(i)]
		\item For all \(f\in\mathcal{H}\),
		\[\sup_{x\in\mathcal{X}}\left\lVert f(x)\right\rVert_\mathcal{Y}\leq\sqrt{B}\left\lVert f\right\rVert_\mathcal{H}.\]
		\item For all \(n\in\mathbb{N}\), 
		\[\sup_{\mathbf{x}\in\mathcal{X}^n}\left\lVert S_\mathbf{x}\right\rVert_\textnormal{op}^2\leq\frac{B}{n}.\]
	\end{enumerate}
\end{lemma}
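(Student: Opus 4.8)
The plan is to derive (i) from the reproducing property and the boundedness assumption, and then obtain (ii) as an immediate consequence of (i) together with the explicit form of $S_\mathbf{x}$.

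For part (i), fix $x\in\mathcal{X}$ and $f\in\mathcal{H}$. I would work with the evaluation operator directly, noting that $f(x)=S_xf$, so that $\lVert f(x)\rVert_\mathcal{Y}\leq\lVert S_x\rVert_{\textnormal{op}}\lVert f\rVert_\mathcal{H}$, and it then suffices to bound $\lVert S_x\rVert_{\textnormal{op}}$. Using the standard Hilbert-space identity $\lVert S_x\rVert_{\textnormal{op}}^2=\lVert S_xS_x^*\rVert_{\textnormal{op}}$ together with the definition $K(x,x)=S_xS_x^*$, we get $\lVert S_x\rVert_{\textnormal{op}}^2=\lVert K(x,x)\rVert_{\textnormal{op}}\leq B$ by Assumption \ref{Aseparablebounded}, hence $\lVert f(x)\rVert_\mathcal{Y}\leq\sqrt{B}\lVert f\rVert_\mathcal{H}$; taking the supremum over $x\in\mathcal{X}$ gives the claim. (An equivalent route avoiding the operator-norm identity: for any $y\in\mathcal{Y}$, the reproducing property gives $\langle y,f(x)\rangle_\mathcal{Y}=\langle K(\cdot,x)y,f\rangle_\mathcal{H}\leq\lVert K(\cdot,x)y\rVert_\mathcal{H}\lVert f\rVert_\mathcal{H}$, and a second application of the reproducing property yields $\lVert K(\cdot,x)y\rVert_\mathcal{H}^2=\langle y,K(x,x)y\rangle_\mathcal{Y}\leq\lVert K(x,x)\rVert_{\textnormal{op}}\lVert y\rVert_\mathcal{Y}^2\leq B\lVert y\rVert_\mathcal{Y}^2$; optimising over $\lVert y\rVert_\mathcal{Y}\leq1$ recovers the same bound.)

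For part (ii), fix $n\in\mathbb{N}$, $\mathbf{x}=(x_1,\dots,x_n)^T\in\mathcal{X}^n$ and $f\in\mathcal{H}$. By the definition of $S_\mathbf{x}$ and of the inner product on $\mathcal{Y}^n$,
\[
\lVert S_\mathbf{x}f\rVert_{\mathcal{Y}^n}^2=\frac{1}{n^2}\sum_{i=1}^n\lVert f(x_i)\rVert_\mathcal{Y}^2\leq\frac{1}{n^2}\cdot n\cdot B\lVert f\rVert_\mathcal{H}^2=\frac{B}{n}\lVert f\rVert_\mathcal{H}^2,
\]
where the inequality applies part (i) to each term. Dividing by $\lVert f\rVert_\mathcal{H}^2$ and taking the supremum over $f\in\mathcal{H}\setminus\{0\}$ gives $\lVert S_\mathbf{x}\rVert_{\textnormal{op}}^2\leq B/n$, and taking the supremum over $\mathbf{x}\in\mathcal{X}^n$ completes the proof.

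I do not anticipate a genuine obstacle here; the only points requiring minor care are keeping track of the factor $1/n$ built into the definition of $S_\mathbf{x}$ (which is what produces $B/n$ rather than $B$ in (ii)), and invoking the operator-norm identity $\lVert T\rVert_{\textnormal{op}}^2=\lVert TT^*\rVert_{\textnormal{op}}$ (or, equivalently, the self-adjointness and positivity of $K(x,x)=S_xS_x^*$) in part (i).
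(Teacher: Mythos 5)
Your proof is correct, and part (ii) is identical to the paper's argument. For part (i), your primary route differs from the paper's: you bound the operator norm of the evaluation map directly via the identity \(\lVert S_x\rVert_{\textnormal{op}}^2=\lVert S_xS_x^*\rVert_{\textnormal{op}}=\lVert K(x,x)\rVert_{\textnormal{op}}\leq B\), whereas the paper takes \(y=f(x)\) in the reproducing property and chains Cauchy--Schwarz inequalities, ending with a division by \(\lVert f(x)\rVert_\mathcal{Y}\) (which needs the harmless caveat that the claim is trivial when \(f(x)=0\)). Your approach is cleaner: it isolates the statement \(\lVert S_x\rVert_{\textnormal{op}}\leq\sqrt{B}\), which is exactly what part (ii) and the later bound on \(\lVert\iota\rVert_{\textnormal{op}}\) reuse, and it avoids the division step entirely; the cost is invoking the C*-identity \(\lVert T\rVert_{\textnormal{op}}^2=\lVert TT^*\rVert_{\textnormal{op}}\), which the paper's more elementary computation does without. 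Your parenthetical alternative (testing \(\langle y,f(x)\rangle_\mathcal{Y}\) against arbitrary unit vectors \(y\) and using \(\lVert K(\cdot,x)y\rVert_\mathcal{H}^2=\langle y,K(x,x)y\rangle_\mathcal{Y}\)) is essentially the paper's argument with the specific choice \(y=f(x)\) replaced by a duality supremum, and is equally valid.
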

\begin{proof}
	\begin{enumerate}[(i)]
		\item We use the reproducing property and the Cauchy-Schwarz inequality repeatedly to obtain:
		\begin{alignat*}{2}
			\left\lVert f(x)\right\rVert^2_\mathcal{Y}=\left\langle f(x),f(x)\right\rangle_\mathcal{Y}=\left\langle f,K(\cdot,x)(f(x))\right\rangle_\mathcal{H}&\leq\left\lVert f\right\rVert_\mathcal{H}\left\langle K(\cdot,x)(f(x)),K(\cdot,x)(f(x))\right\rangle_\mathcal{H}^{1/2}\\
			&=\left\lVert f\right\rVert_\mathcal{H}\left\langle f(x),K(x,x)(f(x))\right\rangle_\mathcal{Y}^{1/2}\\
			&\leq\left\lVert f\right\rVert_\mathcal{H}\left\lVert f(x)\right\rVert_\mathcal{Y}^{1/2}\left\lVert K(x,x)\left(f(x)\right)\right\rVert_\mathcal{Y}^{1/2}\\
			&\leq\left\lVert f\right\rVert_\mathcal{H}\left\lVert f(x)\right\rVert_\mathcal{Y}\left\lVert K(x,x)\right\rVert_\textnormal{op}^{1/2}.
		\end{alignat*}
		Now divide both sides by \(\lVert f(x)\rVert_\mathcal{Y}\) and apply the bound in Assumption \ref{Aseparablebounded}. 
		\item We can apply (i) to obtain
		\[\sup_{\mathbf{x}\in\mathcal{X}^n}\left\lVert S_\mathbf{x}\right\rVert_\text{op}^2=\sup_{\mathbf{x}\in\mathcal{X}^n}\sup_{f\in\mathcal{H},\left\lVert f\right\rVert_\mathcal{H}\leq1}\left\lVert S_\mathbf{x}f\right\rVert_{\mathcal{Y}^n}^2=\sup_{\mathbf{x}\in\mathcal{X}^n}\sup_{f\in\mathcal{H},\left\lVert f\right\rVert_\mathcal{H}\leq1}\frac{1}{n^2}\sum^n_{i=1}\left\lVert f(x_i)\right\rVert^2_\mathcal{Y}\leq\frac{B}{n}.\]
	\end{enumerate}
\end{proof}
Lemma \ref{Lbound}(i) immediately implies that \(\mathcal{H}\subseteq L^2(\mathcal{X},P_X;\mathcal{Y})\), since, for any \(f\in\mathcal{H}\), \(\mathbb{E}\left[\left\lVert f(X)\right\rVert_\mathcal{Y}^2\right]\leq B\left\lVert f\right\rVert_\mathcal{H}^2<\infty\), and the inclusion \(\iota:\mathcal{H}\rightarrow L^2(\mathcal{X},P_X;\mathcal{Y})\) is a bounded linear operator with \(\left\lVert\iota\right\rVert_\text{op}\leq\sqrt{B}\):
\[\left\lVert\iota(f)\right\rVert_2=\sqrt{\mathbb{E}\left[\left\lVert f(X)\right\rVert^2_\mathcal{Y}\right]}\leq\sqrt{B}\left\lVert f\right\rVert_\mathcal{H},\qquad\text{for all }f\in\mathcal{H}.\]
Denote the adjoint of the inclusion by \(\iota^*:L^2(\mathcal{X},P_X;\mathcal{Y})\rightarrow\mathcal{H}\). Then \(\iota^*\circ\iota:\mathcal{H}\rightarrow\mathcal{H}\) and \(\iota\circ\iota^*:L^2(\mathcal{X},P_X;\mathcal{Y})\rightarrow L^2(\mathcal{X},P_X;\mathcal{Y})\) are self-adjoint operators.

Let \(\{(X_i,Y_i)\}^n_{i=1}\) be i.i.d. copies of \((X,Y)\), and denote by \(\mathbf{X}\) and \(\mathbf{Y}\) the random vectors \((X_1,...,X_n)^T\in\mathcal{X}^n\) and \((Y_1,...,Y_n)^T\in\mathcal{Y}^n\). Then the operators \(S_\mathbf{X}:\mathcal{H}\rightarrow\mathcal{Y}^n\) and \(S^*_\mathbf{X}:\mathcal{Y}^n\rightarrow\mathcal{H}\), given by \(S_\mathbf{X}(f)=\frac{1}{n}(f(X_1),...,f(X_n))^T\) and \(S^*_\mathbf{X}((y_1,...,y_n)^T)=\frac{1}{n}\sum^n_{i=1}K(X_i,\cdot)y_i\) respectively, are random. 
\begin{lemma}\label{Lintegralexpression}
	\begin{enumerate}[(i)]
		\item An explicit integral expression for \(\iota^*:L^2(\mathcal{X},P_X;\mathcal{Y})\rightarrow\mathcal{H}\) can be given as
		\[\iota^*\left(f\right)\left(\cdot\right)=\mathbb{E}\left[K\left(\cdot,X\right)f\left(X\right)\right]\qquad\text{for }f\in L^2(\mathcal{X},P_X;\mathcal{Y}).\]
		\item For any \(f\in L^2\left(\mathcal{X},P_X;\mathcal{Y}\right)\) and any \(n\in\mathbb{N}\),
		\[\iota^*\left(f\right)=\mathbb{E}\left[S^*_\mathbf{X}\left(\left(f\left(X_1\right),...,f\left(X_n\right)\right)^T\right)\right].\]
		\item For any \(f\in\mathcal{H}\) and any \(n\in\mathbb{N}\), \[\iota^*\circ\iota\left(f\right)=\mathbb{E}\left[nS^*_\mathbf{X}\circ S_\mathbf{X}\left(f\right)\right].\]
	\end{enumerate} 
\end{lemma}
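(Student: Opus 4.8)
The plan is to establish (i) first, by identifying the proposed right-hand side with $\iota^*(f)$ through the defining property of the adjoint, and then to obtain (ii) and (iii) as essentially bookkeeping consequences of (i) together with the explicit formula for $S^*_\mathbf{x}$ and the i.i.d.\ assumption.

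For (i), I would first check that for each fixed $f\in L^2(\mathcal{X},P_X;\mathcal{Y})$ the map $x\mapsto K(\cdot,x)f(x)=S_x^*(f(x))$ is a Bochner-$P_X$-integrable $\mathcal{H}$-valued function, so that $g:=\mathbb{E}[K(\cdot,X)f(X)]\in\mathcal{H}$ is well-defined. Strong measurability follows from separability of $\mathcal{H}$ (Assumption \ref{Aseparablebounded}) via Pettis's theorem, since $x\mapsto\langle K(\cdot,x)f(x),h\rangle_\mathcal{H}=\langle f(x),h(x)\rangle_\mathcal{Y}$ is measurable for every $h\in\mathcal{H}$ by the reproducing property; integrability of the norm follows from $\lVert S_x^*(f(x))\rVert_\mathcal{H}^2=\langle f(x),K(x,x)f(x)\rangle_\mathcal{Y}\le B\lVert f(x)\rVert_\mathcal{Y}^2$ together with Jensen's inequality, giving $\mathbb{E}[\lVert K(\cdot,X)f(X)\rVert_\mathcal{H}]\le\sqrt{B}\lVert f\rVert_2<\infty$. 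I would then show $g=\iota^*(f)$ by verifying $\langle g,h\rangle_\mathcal{H}=\langle f,\iota(h)\rangle_2$ for all $h\in\mathcal{H}$: the left-hand side equals $\mathbb{E}[\langle K(\cdot,X)f(X),h\rangle_\mathcal{H}]$ after pulling the bounded functional $\langle\,\cdot\,,h\rangle_\mathcal{H}$ inside the Bochner integral, and this equals $\mathbb{E}[\langle f(X),h(X)\rangle_\mathcal{Y}]$ by the reproducing property, which is exactly $\langle f,\iota(h)\rangle_2$. Finally, to pass from the Hilbert-space identity $\iota^*(f)=g$ to the pointwise formula in the statement, I would apply the bounded evaluation operator $S_x$ to both sides and pull it inside the integral: $\iota^*(f)(x)=S_x(g)=\mathbb{E}[S_xS_X^*f(X)]=\mathbb{E}[K(x,X)f(X)]$.

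For (ii), recalling $S^*_\mathbf{X}((f(X_1),\dots,f(X_n))^T)=\frac1n\sum_{i=1}^n K(X_i,\cdot)f(X_i)$, I would note Bochner integrability (from $\lVert\frac1n\sum_i K(X_i,\cdot)f(X_i)\rVert_\mathcal{H}\le\frac{\sqrt B}{n}\sum_i\lVert f(X_i)\rVert_\mathcal{Y}$, which has finite expectation) and take expectations term by term; since each $X_i$ is distributed as $X$, every term satisfies $\mathbb{E}[K(X_i,\cdot)f(X_i)]=\mathbb{E}[K(X,\cdot)f(X)]=\iota^*(f)$ by (i), and averaging $n$ identical terms yields the claim. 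For (iii) I would simply observe that for $f\in\mathcal{H}$ one has $nS^*_\mathbf{X}\circ S_\mathbf{X}(f)=nS^*_\mathbf{X}\big(\frac1n(f(X_1),\dots,f(X_n))^T\big)=S^*_\mathbf{X}((f(X_1),\dots,f(X_n))^T)$, so (iii) is immediate from (ii) applied to $\iota(f)\in L^2(\mathcal{X},P_X;\mathcal{Y})$, using $\iota(f)(X_i)=f(X_i)$ $P$-almost surely.

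I do not anticipate a serious obstacle. The only points requiring care are the measurability and Bochner-integrability justifications for the $\mathcal{H}$-valued integrands — handled by separability of $\mathcal{H}$ and boundedness of $K$ — and the repeated use of the fact that a bounded linear operator (the inner product against a fixed element, and the evaluation operator $S_x$) commutes with the Bochner integral. Arguably the subtlest step is interpreting the pointwise identity in (i) correctly given that elements of $L^2(\mathcal{X},P_X;\mathcal{Y})$ are equivalence classes, which is precisely why I route the argument through the evaluation operators $S_x$, which are genuinely well-defined on $\mathcal{H}$.
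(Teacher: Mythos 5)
Your proposal is correct and follows essentially the same route as the paper: part (i) is the same adjoint verification $\langle \iota h, f\rangle_2 = \langle h, \mathbb{E}[K(\cdot,X)f(X)]\rangle_\mathcal{H}$ via the reproducing property, and (ii), (iii) are the same bookkeeping with $S^*_\mathbf{X}$ and the i.i.d.\ assumption. The only difference is that you additionally justify strong measurability and Bochner-integrability of $x\mapsto K(\cdot,x)f(x)$ and the interchange of bounded operators with the Bochner integral, points the paper's three-line proof takes for granted.
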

\begin{proof}
	\begin{enumerate}[(i)] 
		\item Take any \(f_1\in\mathcal{H}\) and \(f_2\in L^2(\mathcal{X},P_X;\mathcal{Y})\). Then the reproducing property gives
		\[\left\langle\iota f_1,f_2\right\rangle_2=\mathbb{E}\left[\left\langle f_1(X),f_2(X)\right\rangle_\mathcal{Y}\right]=\mathbb{E}\left[\left\langle f_1,K\left(\cdot,X\right)\left(f_2\left(X\right)\right)\right\rangle_\mathcal{H}\right]=\left\langle f_1,\mathbb{E}\left[K\left(\cdot,X\right)\left(f_2\left(X\right)\right)\right]\right\rangle_\mathcal{H}.\]
		\item The fact that \(X_1,...,X_n\stackrel{\text{i.i.d.}}{\sim}X\) and (i) immediately gives
		\[\mathbb{E}\left[S^*_\mathbf{X}\left(\left(f\left(X_1\right),...,f\left(X_n\right)\right)^T\right)\right]=\mathbb{E}\left[\frac{1}{n}\sum^n_{i=1}K\left(\cdot,X_i\right)f(X_i)\right]=\mathbb{E}\left[K\left(\cdot,X\right)f(X)\right]=\iota^*(f).\]
		\item Applying (ii) and the definition of \(S_\mathbf{X}\), 
		\[\iota^*\circ\iota(f)=\mathbb{E}\left[S^*_\mathbf{X}\left(\left(F\left(X_1\right),...,f\left(X_n\right)\right)^T\right)\right]=\mathbb{E}\left[S^*_\mathbf{X}\left(nS_\mathbf{X}\left(f\right)\right)\right]=\mathbb{E}\left[nS^*_\mathbf{X}\circ S_\mathbf{X}\left(f\right)\right].\]
	\end{enumerate}
\end{proof}
Although the inclusion operator \(\iota\) is a compact (in fact, even Hilbert-Schmidt) operator if \(\mathcal{Y}\) is \(\mathbb{R}\) \citep[p. 127, Theorem 4.27]{steinwart2008support}, this is not true in the general case we consider in this report. Indeed, consider the following counterexample, in which \(K(x,x')=k(x,x')\text{Id}\), where \(k(\cdot,\cdot)\) is a bounded scalar kernel with \(k(x_0,x_0)=1\) for some \(x_0\in\mathcal{X}\) and \(\text{Id}:\mathcal{Y}\rightarrow\mathcal{Y}\) is the identity operator. Let \(\{y_i\}_{i=1}^\infty\) be a (countable, by separability assumption) orthonormal basis of \(\mathcal{Y}\). Then \(\{K(x_0,\cdot)y_i\}_{i=1}^\infty\) form a bounded sequence in \(\mathcal{H}\), since
\[\left\lVert K(x_0,\cdot)y_i\right\rVert_\mathcal{H}^2=\left\langle y_i,K(x_0,x_0)y_i\right\rangle_\mathcal{Y}=\left\lVert y_i\right\rVert_\mathcal{Y}^2=1,\]
by the reproducing property. However, the sequence \(\{\iota(K(x_0,\cdot)y_i)\}_{i=1}^\infty\) in \(L^2(\mathcal{X},P_X;\mathcal{Y})\) cannot have a convergent subsequence, since, for any \(i\neq j\), 
\[\left\lVert\iota\left(K\left(x_0,\cdot\right)y_i\right)-\iota\left(K\left(x_0,\cdot\right)y_j\right)\right\rVert^2_2=\mathbb{E}\left[\left\lVert k(x_0,X)y_i-k(x_0,X)y_j\right\rVert_\mathcal{Y}^2\right]=2\mathbb{E}\left[k(x_0,X)^2\right]>0.\]
Hence \(\iota\) is not a compact operator\footnote{See \citet[p.186]{bollobas1999linear} for the definition and equivalent formulations of compact operators. This counterexample does not contradict \citet[Proposition 4.8]{carmeli2006vector}, which says that \(\iota\) is compact if \(K(x,x):\mathcal{Y}\rightarrow\mathcal{Y}\) is compact for all \(x\in\mathcal{X}\) and \(\mathbb{E}\left[\left\lVert K(X,X)\right\rVert_\text{op}\right]<\infty\), since \(K(x,x)=k(x,x)\text{Id}\) is clearly not a compact operator.}. 

The self-adjoint operator \(\iota\circ\iota^*\) is also not compact. Indeed, let \(\{y_i\}_{i=1}^\infty\) be an orthonormal basis of \(\mathcal{Y}\) again, and consider the sequence of functions \(f_i\in L^2(\mathcal{X},P_X;\mathcal{Y})\) given by \(f_i(x)=y_i\) for all \(x\in\mathcal{X}\). Also, consider again the kernel \(K(x,x')=k(x,x')\text{Id}\), where \(k(\cdot,\cdot)\) is a scalar kernel and \(\text{Id}:\mathcal{Y}\rightarrow\mathcal{Y}\) is the identity operator. Then \(\left\lVert f_i\right\rVert^2_2=\mathbb{E}\left[\left\lVert f_i(X)\right\rVert^2_\mathcal{Y}\right]=\left\lVert y_i\right\rVert^2_\mathcal{Y}=1\), so the sequence is bounded, but for any \(i\neq j\),
\begin{alignat*}{2}
	\left\lVert\iota\circ\iota^*(f_i)-\iota\circ\iota^*(f_j)\right\rVert^2_2&=\mathbb{E}_{X_1}\left[\left\lVert\mathbb{E}_{X_2}\left[K(X_1,X_2)f_i(X_2)\right]-\mathbb{E}_{X_2}\left[K(X_1,X_2)f_j(X_2)\right]\right\rVert_\mathcal{Y}^2\right]\\
	&=\mathbb{E}_{X_1}\left[\left\lVert\mathbb{E}_{X_2}\left[k(X_1,X_2)\right]y_i-\mathbb{E}_{X_2}\left[k(X_1,X_2)\right]y_j\right\rVert^2_\mathcal{Y}\right]\\
	&=\mathbb{E}_{X_1}\left[\mathbb{E}_{X_2}\left[k(X_1,X_2)\right]^2\left\lVert y_i-y_j\right\rVert^2_\mathcal{Y}\right]\\
	&=2\mathbb{E}_{X_1}\left[\mathbb{E}_{X_2}\left[k(X_1,X_2)\right]^2\right]\\
	&>0,
\end{alignat*}
using the expression for \(\iota^*\) given in Lemma \ref{Lintegralexpression}(i). So the sequence \(\{\iota\circ\iota^*(f_i)\}^\infty_{i=1}\) in \(L^2(\mathcal{X},P_X;\mathcal{Y})\) cannot have a convergent subsequence, which in turn implies that \(\iota\circ\iota^*\) is not compact. 

\subsection{Regularised Least-Squares Regression}\label{SSrls}
As above, take i.i.d. copies \(\{(X_i,Y_i)\}_{i=1}^n\) of \((X,Y)\). We define the unregularised population, regularised population, unregularised empirical and regularised empirical risk functions with respect to the squared-loss as follows:
\begin{equation}\label{Erisk}
	\begin{split}
		R(f)&=\mathbb{E}\left[\left\lVert f(X)-Y\right\rVert^2_\mathcal{Y}\right];\\
		R_\lambda(f)&=\mathbb{E}\left[\left\lVert f(X)-Y\right\rVert^2_\mathcal{Y}\right]+\lambda\left\lVert f\right\rVert^2_\mathcal{H};\\
		R_n(f)&=\frac{1}{n}\sum^n_{i=1}\left\lVert f(X_i)-Y_i\right\rVert^2_\mathcal{Y};\text{ and}\\
		R_{n,\lambda}(f)&=\frac{1}{n}\sum^n_{i=1}\left\lVert f(X_i)-Y_i\right\rVert^2_\mathcal{Y}+\lambda\left\lVert f\right\rVert^2_\mathcal{H},
	\end{split}
\end{equation}
where \(\lambda>0\) is a regularisation parameter. Here, \(R\) and \(R_n\) is defined for any \(f\in L^2(\mathcal{X},P_X;\mathcal{Y})\), but \(R_\lambda\) and \(R_{n,\lambda}\) are only defined for \(f\in\mathcal{H}\). Also, the population risks \(R\) and \(R_\lambda\) are deterministic functions of \(F\), whereas the empirical risks \(R_n\) and \(R_{n,\lambda}\) are random, varying with the random sample \(\{(X_i,Y_i)\}_{i=1}^n\). 

The following decomposition of the population risk is well-known; see, for example, \citet[Propoisition 1]{cucker2002mathematical}. 
\begin{lemma}\label{Lbiasvariance}
	We have the following decomposition of the risk \(R\):
	\[R(f)=\mathbb{E}\left[\left\lVert f(X)-f^*(X)\right\rVert_\mathcal{Y}^2\right]+R\left(f^*\right).\]
\end{lemma}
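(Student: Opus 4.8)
The plan is to mimic the classical finite-dimensional bias–variance decomposition: insert $\pm f^*(X)$, expand the squared $\mathcal{Y}$-norm, and show that the resulting cross term has zero expectation by conditioning on $X$.

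First I would fix $f\in L^2(\mathcal{X},P_X;\mathcal{Y})$ and write, pointwise in $\omega$,
\[
\left\lVert f(X)-Y\right\rVert^2_\mathcal{Y}=\left\lVert f(X)-f^*(X)\right\rVert^2_\mathcal{Y}+2\left\langle f(X)-f^*(X),\,f^*(X)-Y\right\rangle_\mathcal{Y}+\left\lVert f^*(X)-Y\right\rVert^2_\mathcal{Y}.
\]
Before taking expectations I would check integrability of each of the three terms: $\lVert f(X)-f^*(X)\rVert^2_\mathcal{Y}$ is $P$-integrable because $f,f^*\in L^2(\mathcal{X},P_X;\mathcal{Y})$ (the latter shown in the excerpt via Jensen and Assumption \ref{Afinitevariance}); $\lVert f^*(X)-Y\rVert^2_\mathcal{Y}\leq 2\lVert f^*(X)\rVert^2_\mathcal{Y}+2\lVert Y\rVert^2_\mathcal{Y}$ is integrable for the same reasons, so in particular $R(f^*)=\mathbb{E}[\lVert f^*(X)-Y\rVert^2_\mathcal{Y}]<\infty$ and the statement is non-vacuous; and the cross term is integrable by the Cauchy–Schwarz inequality in $\mathcal{Y}$ followed by Cauchy–Schwarz in $L^2(\Omega)$. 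Taking expectations then reduces the lemma to showing $\mathbb{E}\!\left[\left\langle f(X)-f^*(X),\,f^*(X)-Y\right\rangle_\mathcal{Y}\right]=0$.

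To kill the cross term I would condition on $X$ and use the tower property together with the ``taking out what is known'' property of the Bochner conditional expectation: since $g:=f(X)-f^*(X)$ is $X$-measurable and $\mathcal{Y}$-valued, and since for each fixed value of $g$ the pairing $\langle g,\cdot\rangle_\mathcal{Y}$ is a continuous linear functional on $\mathcal{Y}$, one has
\[
\mathbb{E}\!\left[\left\langle f(X)-f^*(X),\,f^*(X)-Y\right\rangle_\mathcal{Y}\,\middle|\,X\right]=\left\langle f(X)-f^*(X),\,\mathbb{E}\!\left[f^*(X)-Y\,\middle|\,X\right]\right\rangle_\mathcal{Y}.
\]
By the defining relation \eqref{EdefinitionF}, $\mathbb{E}[Y\mid X]=f^*(X)$, so the inner conditional expectation equals $f^*(X)-f^*(X)=0$; hence the conditional expectation of the cross term vanishes $P$-a.s., and so does its expectation. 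Combining this with the display above gives $R(f)=\mathbb{E}[\lVert f(X)-f^*(X)\rVert^2_\mathcal{Y}]+R(f^*)$.

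The only point requiring genuine care — rather than a routine manipulation — is the interchange $\mathbb{E}[\langle g(X),Y\rangle_\mathcal{Y}\mid X]=\langle g(X),\mathbb{E}[Y\mid X]\rangle_\mathcal{Y}$ for a separable-Hilbert-space-valued $Y$, which in finite dimensions is trivial by coordinatewise linearity but here rests on the basic properties of the Bochner conditional expectation from \citet[p.45, Definition 38]{dinculeanu2000vector}. A clean way to discharge it is to first verify it for $g$ of the elementary form $g(X)=\mathbf{1}_A(X)\,y_0$ with $A\in\mathfrak{X}$ and $y_0\in\mathcal{Y}$ fixed, where it reduces to the defining integral identity of $\mathbb{E}[Y\mid X]$ tested against $y_0$, and then to extend to a general $X$-measurable $g\in L^2(\mathcal{X},P_X;\mathcal{Y})$ by linearity and an $L^2$-approximation argument (valid by separability of $\mathcal{Y}$ and Assumption \ref{Afinitevariance}, which controls all the relevant Bochner integrals via Cauchy–Schwarz). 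Once this identity is in hand, the rest of the proof is just the expansion above.
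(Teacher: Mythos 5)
Your proof is correct and follows essentially the same route as the paper: insert $\pm f^*(X)$, expand the squared norm, and annihilate the cross term via the tower property and $\mathbb{E}[Y\mid X]=f^*(X)$. The only difference is that you make explicit the integrability checks and the ``taking out what is known'' identity for the Bochner conditional expectation, which the paper uses implicitly.
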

\begin{proof}
	See that
	\begin{alignat*}{2}
		R(f)&=\mathbb{E}\left[\left\lVert f(X)-f^*(X)+f^*(X)-Y\right\rVert_\mathcal{Y}^2\right]\\
		&=\mathbb{E}\left[\left\lVert f(X)-f^*(X)\right\rVert_\mathcal{Y}^2\right]+\mathbb{E}\left[\left\lVert f^*(X)-Y\right\rVert^2_\mathcal{Y}\right]+2\mathbb{E}\left[\left\langle f(X)-f^*(X),f^*(X)-Y\right\rangle_\mathcal{Y}\right]\\
		&=\mathbb{E}\left[\left\lVert f(X)-f^*(X)\right\rVert_\mathcal{Y}^2\right]+R(f^*)+2\mathbb{E}\left[\left\langle f(X)-f^*(X),f^*(X)-\mathbb{E}\left[Y\mid X\right]\right\rangle_\mathcal{Y}\right]\\
		&=\mathbb{E}\left[\left\lVert f(X)-f^*(X)\right\rVert_\mathcal{Y}^2\right]+R(f^*),
	\end{alignat*}
	where we applied the law of iterated expectations to go from the second line to the third, and used \(f^*(X)=\mathbb{E}[Y\mid X]\) to go from the third to the last. 
\end{proof}
From Lemma \ref{Lbiasvariance}, it is immediate that \(f^*\) is the minimiser of \(R\) in \(L^2(\mathcal{X},P_X;\mathcal{Y})\). The following lemma formulates the minimisers in \(\mathcal{H}\) of the regularised risks \(R_\lambda\) and \(R_{n,\lambda}\) in terms of the inclusion and evaluation operators. Similar results can be found in many places in the literature, for example \citet[Section 4]{micchelli2005learning} or \citet[p.117, Theorem 5.1]{engl1996regularization}. 
\begin{lemma}\label{Lriskminimisers}
	\begin{enumerate}[(i)]
		\item The minimiser \(f_\lambda\) of the risk \(R_\lambda\) in \(\mathcal{H}\) is unique and is given by
		\[f_\lambda\vcentcolon=\argmin_{f\in\mathcal{H}}R_\lambda(f)=\left(\iota^*\circ\iota+\lambda\textnormal{Id}_\mathcal{H}\right)^{-1}\iota^*f^*=\iota^*\left(\iota\circ\iota^*+\lambda\textnormal{Id}_2\right)^{-1}f^*,\]
		where \(\textnormal{Id}_\mathcal{H}:\mathcal{H}\rightarrow\mathcal{H}\) and \(\textnormal{Id}_2:L^2(\mathcal{X},P_X;\mathcal{Y})\rightarrow L^2(\mathcal{X},P_X;\mathcal{Y})\) are the identity operators. 
		\item The minimiser \(\hat{f}_{n,\lambda}\) of the risk \(R_{n,\lambda}\) in \(\mathcal{H}\) is unique and is given by
		\[\hat{f}_{n,\lambda}\vcentcolon=\argmin_{f\in\mathcal{H}}R_{n,\lambda}(f)=\left(nS^*_\mathbf{X}\circ S_\mathbf{X}+\lambda\textnormal{Id}_\mathcal{H}\right)^{-1}S^*_\mathbf{X}\mathbf{Y}=S^*_\mathbf{X}\left(nS_\mathbf{X}\circ S^*_\mathbf{X}+\lambda\textnormal{Id}_{\mathcal{Y}^n}\right)^{-1}\mathbf{Y},\]
		where \(\textnormal{Id}_{\mathcal{Y}^n}:\mathcal{Y}^n\rightarrow\mathcal{Y}^n\) is the identity operator. 
	\end{enumerate}
\end{lemma}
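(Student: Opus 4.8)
The plan is to observe that in each part the regularised risk, restricted to $\mathcal{H}$, is a quadratic functional of the form $f\mapsto\langle Af,f\rangle_\mathcal{H}-2\langle f,b\rangle_\mathcal{H}+c$, where $A:\mathcal{H}\to\mathcal{H}$ is bounded, self-adjoint and strictly positive (in fact $\langle Af,f\rangle_\mathcal{H}\geq\lambda\lVert f\rVert_\mathcal{H}^2$ for all $f$), $b\in\mathcal{H}$, and $c$ does not depend on $f$; the minimiser is then read off by completing the square. Note first that all four risks are finite on $\mathcal{H}$, by Lemma \ref{Lbound}(i) and Assumption \ref{Afinitevariance}, so this is well posed.

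For part (i) I would use Lemma \ref{Lbiasvariance} (or, equivalently, expand the square and apply the tower property together with $f^*(X)=\mathbb{E}[Y\mid X]$) to write $R(f)=\mathbb{E}[\lVert f(X)-f^*(X)\rVert_\mathcal{Y}^2]+R(f^*)=\lVert\iota f-f^*\rVert_2^2+R(f^*)$, which makes sense since $f^*\in L^2(\mathcal{X},P_X;\mathcal{Y})$ as shown earlier. Expanding $\lVert\iota f-f^*\rVert_2^2$ and pushing the inner products into $\mathcal{H}$ through the adjoint $\iota^*$ yields
\[R_\lambda(f)=\langle(\iota^*\circ\iota+\lambda\textnormal{Id}_\mathcal{H})f,f\rangle_\mathcal{H}-2\langle f,\iota^*f^*\rangle_\mathcal{H}+\text{const},\]
so here $A=\iota^*\circ\iota+\lambda\textnormal{Id}_\mathcal{H}$ and $b=\iota^*f^*$. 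For part (ii) I would expand $\tfrac1n\sum_{i=1}^n\lVert f(X_i)-Y_i\rVert_\mathcal{Y}^2$ and use the elementary identities $\tfrac1n\sum_i\lVert f(X_i)\rVert_\mathcal{Y}^2=n\lVert S_\mathbf{X}f\rVert_{\mathcal{Y}^n}^2=\langle nS_\mathbf{X}^*S_\mathbf{X}f,f\rangle_\mathcal{H}$ and $\tfrac1n\sum_i\langle f(X_i),Y_i\rangle_\mathcal{Y}=\langle S_\mathbf{X}f,\mathbf{Y}\rangle_{\mathcal{Y}^n}=\langle f,S_\mathbf{X}^*\mathbf{Y}\rangle_\mathcal{H}$, which come straight from the definitions of $S_\mathbf{X}$ and $S_\mathbf{X}^*$ (the $\tfrac1n$ built into $S_\mathbf{X}$ is exactly what produces the factor $n$). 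This gives
\[R_{n,\lambda}(f)=\langle(nS_\mathbf{X}^*\circ S_\mathbf{X}+\lambda\textnormal{Id}_\mathcal{H})f,f\rangle_\mathcal{H}-2\langle f,S_\mathbf{X}^*\mathbf{Y}\rangle_\mathcal{H}+\text{const},\]
so here $A=nS_\mathbf{X}^*\circ S_\mathbf{X}+\lambda\textnormal{Id}_\mathcal{H}$ and $b=S_\mathbf{X}^*\mathbf{Y}$.

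Since $\iota^*\circ\iota$ and $S_\mathbf{X}^*\circ S_\mathbf{X}$ are bounded, self-adjoint and positive semi-definite, in both cases $A$ satisfies $\langle Af,f\rangle_\mathcal{H}\geq\lambda\lVert f\rVert_\mathcal{H}^2$; in particular $A$ is boundedly invertible (by the Lax--Milgram theorem, or by spectral calculus since $A$ is self-adjoint with spectrum contained in $[\lambda,\infty)$). Completing the square,
\[\langle Af,f\rangle_\mathcal{H}-2\langle f,b\rangle_\mathcal{H}=\langle A(f-A^{-1}b),\,f-A^{-1}b\rangle_\mathcal{H}-\langle A^{-1}b,b\rangle_\mathcal{H},\]
and since the first term on the right is $\geq\lambda\lVert f-A^{-1}b\rVert_\mathcal{H}^2\geq 0$ with equality iff $f=A^{-1}b$, the unique minimiser is $A^{-1}b$. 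This yields the first expressions, $f_\lambda=(\iota^*\circ\iota+\lambda\textnormal{Id}_\mathcal{H})^{-1}\iota^*f^*$ and $\hat f_{n,\lambda}=(nS_\mathbf{X}^*\circ S_\mathbf{X}+\lambda\textnormal{Id}_\mathcal{H})^{-1}S_\mathbf{X}^*\mathbf{Y}$.

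The alternative expressions follow from the push-through identity. From $\iota^*(\iota\circ\iota^*+\lambda\textnormal{Id}_2)=(\iota^*\circ\iota+\lambda\textnormal{Id}_\mathcal{H})\iota^*$, which is immediate on expanding, left-multiplying by $(\iota^*\circ\iota+\lambda\textnormal{Id}_\mathcal{H})^{-1}$ and right-multiplying by $(\iota\circ\iota^*+\lambda\textnormal{Id}_2)^{-1}$ (both well defined by the same positivity argument) gives $(\iota^*\circ\iota+\lambda\textnormal{Id}_\mathcal{H})^{-1}\iota^*=\iota^*(\iota\circ\iota^*+\lambda\textnormal{Id}_2)^{-1}$, and likewise with $S_\mathbf{X}$ in place of $\iota$ and $\textnormal{Id}_{\mathcal{Y}^n}$ in place of $\textnormal{Id}_2$. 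Substituting into the formulas above gives the second expressions. I do not expect a genuine obstacle: the proof is essentially bookkeeping, and the only points needing a little care are keeping track of the $\tfrac1n$ normalisation in $S_\mathbf{X}$, verifying that $A$ is \emph{boundedly} invertible rather than merely injective, and noting that $\iota^*f^*$ is well defined precisely because $f^*\in L^2(\mathcal{X},P_X;\mathcal{Y})$.
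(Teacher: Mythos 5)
Your proof is correct, but it takes a genuinely different route from the paper. The paper's proof is variational: it verifies that \(\tilde R_\lambda(f)=\lVert\iota f-f^*\rVert_2^2+\lambda\lVert f\rVert_\mathcal{H}^2\) (and likewise \(R_{n,\lambda}\)) is Fr\'echet differentiable, coercive and strictly convex, invokes Lemma \ref{Lglobalextremum} for existence and uniqueness of a critical point, computes the derivative via Lemmas \ref{Lfrechetsquare} and \ref{Lfrechetcomposition}, and solves the resulting normal equation \((\iota^*\circ\iota+\lambda\textnormal{Id}_\mathcal{H})f_\lambda=\iota^*f^*\). You instead exhibit each regularised risk as the quadratic form \(\langle Af,f\rangle_\mathcal{H}-2\langle f,b\rangle_\mathcal{H}+\text{const}\) with \(A\geq\lambda\textnormal{Id}_\mathcal{H}\) and complete the square, which delivers existence, uniqueness and the closed form in a single algebraic step and makes the entire Appendix \ref{Sfrechet} machinery unnecessary for this lemma; the price is that the completion-of-squares trick is specific to quadratic losses, whereas the paper's convexity-plus-coercivity argument is the template that survives when the loss is not quadratic. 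Your identification of \(A\) and \(b\) in both parts is right (the \(\tfrac1n\) built into \(S_\mathbf{X}\) is handled correctly), your justification of bounded invertibility matches the paper's, and your derivation of the alternative expressions via the push-through identity \((\iota^*\circ\iota+\lambda\textnormal{Id}_\mathcal{H})^{-1}\iota^*=\iota^*(\iota\circ\iota^*+\lambda\textnormal{Id}_2)^{-1}\) is the same verification the paper performs, just stated as a general operator identity rather than checked on the particular vector \(f^*\).
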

\begin{proof}
	\begin{enumerate}[(i)]
		\item By Lemma \ref{Lbiasvariance}, we have \(f_\lambda=\argmin_{f\in\mathcal{H}}\tilde{R}_\lambda(f)\), where, for any \(f\in\mathcal{H}\),
		\begin{alignat*}{2}
			\tilde{R}_\lambda(f)&=\mathbb{E}\left[\left\lVert f(X)-f^*(X)\right\rVert^2_\mathcal{Y}\right]+\lambda\left\lVert f\right\rVert^2_\mathcal{H}\\
			&=\left\lVert\iota(f)-f^*\right\rVert^2_2+\lambda\left\lVert f\right\rVert^2_\mathcal{H}.
		\end{alignat*}
		Then \(\tilde{R}_\lambda\) is clearly continuously Fr\'echet differentiable, coercive (Definition \ref{Dcoercive}) and strictly convex (Definition \ref{Dconvex}). So by Lemma \ref{Lglobalextremum}, there exists a unique critical point \(f_\lambda\) that minimises \(\tilde{R}_\lambda\), and by Lemma \ref{Llocalextremum}, at this critical point, we have \(\tilde{R}_\lambda(f_\lambda)=0\). Denote by \(J:L^2(\mathcal{X},L_X;\mathcal{Y})\rightarrow\mathbb{R}\) the map \(f\mapsto\lVert f-f^*\rVert^2_2\); then we have \(J'(f)=2(f-f^*)\) by Lemma \ref{Lfrechetsquare}. Taking the Fr\'echet derivative using Lemma \ref{Lfrechetcomposition}, we have
		\begin{alignat*}{3}
			&&\tilde{R}'_\lambda(f)&=\iota^*\circ J'\circ\iota(f)+2\lambda f\\
			&&&=2\iota^*\left(\iota(f)-f^*\right)+2\lambda f\\
			\implies&\qquad&\iota^*\left(\iota(f_\lambda)-f^*\right)+\lambda f_\lambda&=0\\
			\implies&&\left(\iota^*\circ\iota+\lambda\text{Id}_\mathcal{H}\right)f_\lambda&=\iota^*f^*\\
			\implies&&f_\lambda&=\left(\iota^*\circ\iota+\lambda\text{Id}_\mathcal{H}\right)^{-1}\iota^*f^*,
		\end{alignat*}
		where \((\iota^*\circ\iota+\lambda\text{Id}_\mathcal{H})\) is invertible since \(\iota^*\circ\iota\) is positive and self-adjoint, and \(\lambda>0\). Now see that
		\begin{alignat*}{2}
			\left(\iota^*\circ\iota+\lambda\text{Id}_\mathcal{H}\right)\iota^*\left(\iota\circ\iota^*+\lambda\text{Id}_2\right)^{-1}f^*&=\iota^*\left(\iota\circ\iota^*+\lambda\text{Id}_2\right)\left(\iota\circ\iota^*+\lambda\text{Id}_2\right)^{-1}f^*\\
			&=\iota^*f^*.
		\end{alignat*}
		Apply \((\iota^*\circ\iota+\lambda\text{Id}_\mathcal{H})^{-1}\) to both sides to obtain
		\[f_\lambda=\iota^*\left(\iota\circ\iota^*+\lambda\text{Id}_2\right)^{-1}f^*.\] 
		\item We can write \(R_{n,\lambda}(f)\) as
		\begin{alignat*}{2}
			R_{n,\lambda}(f)&=\frac{1}{n}\sum^n_{i=1}\left\lVert f(X_i)-Y_i\right\rVert_\mathcal{Y}^2+\lambda\lVert f\rVert^2_\mathcal{H}\\
			&=n\left\lVert S_\mathbf{X}(f)-\frac{1}{n}\mathbf{Y}\right\rVert^2_{\mathcal{Y}^n}+\lambda\left\lVert f\right\rVert^2_\mathcal{H}.
		\end{alignat*}
		Then following the same steps as in (i), we take the Fr\'echet derivative of \(R_{n,\lambda}\) and set it to 0 at \(\hat{f}_{n,\lambda}\):
		\begin{alignat*}{3}
			&&R'_{n,\lambda}(f)&=2nS_\mathbf{X}^*\left(S_\mathbf{X}(f)-\frac{1}{n}\mathbf{Y}\right)+2\lambda f\\
			\implies&\qquad&nS^*_\mathbf{X}\left(S_\mathbf{X}(\hat{f}_{n,\lambda})-\frac{1}{n}\mathbf{Y}\right)+\lambda\hat{f}_{n,\lambda}&=0\\
			\implies&&\left(nS^*_\mathbf{X}\circ S_\mathbf{X}+\lambda\text{Id}_\mathcal{H}\right)\hat{f}_{n,\lambda}&=S^*_\mathbf{X}\mathbf{Y}\\
			\implies&&\hat{f}_{n,\lambda}&=\left(nS^*_\mathbf{X}\circ S_\mathbf{X}+\lambda\text{Id}_\mathcal{H}\right)^{-1}S^*_\mathbf{X}\mathbf{Y},
		\end{alignat*}
		where \((nS_\mathbf{X}^*\circ S_\mathbf{X}+\lambda\text{Id}_\mathcal{H})\) is invertible since \(nS_\mathbf{X}^*\circ S_\mathbf{X}\) is positive and self-adjoint, and \(\lambda>0\). 
		
		By the same argument as in (i), we also have
		\[\hat{f}_{n,\lambda}=S^*_\mathbf{X}\left(nS_\mathbf{X}\circ S^*_\mathbf{X}+\lambda\textnormal{Id}_{\mathcal{Y}^n}\right)^{-1}\mathbf{Y}.\]
	\end{enumerate}
\end{proof}

\subsection{Convergence in Probability, Convergence Rates \& Chebyshev's Inequality}\label{SSconvergence}
We are interested in the convergence of \(\hat{f}_{n,\lambda}\) to \(f^*\). The convergence that we will consider in this paper is \textit{convergence in probability}, defined as follows. 
\begin{definition}
	Let \(\{Z_n\}_{n\geq1}\) and \(Z\) be real-valued random variables defined on the same probability space. Then we say that \(\{Z_n\}_{n\geq1}\) \textit{converges in probability} to \(X\), and write \(Z_n\stackrel{P}{\rightarrow}Z\), if, for any \(\epsilon,\delta>0\), 
	\[P\left(\left\lvert Z_n-Z\right\rvert>\epsilon\right)\leq\delta.\]
\end{definition}
We will also be interested in the rate at which \(\hat{f}_{n,\lambda}\) converges to \(f^*\). 
\begin{definition}
	We say that a sequence \(\{Z_n\}_{n\geq1}\) of real-valued random variables is \textit{bounded in probability}, and write \(Z_n=\mathcal{O}_P(1)\), if
	\[\lim_{M\rightarrow\infty}\limsup_{n\rightarrow\infty}P\left(\left\lvert Z_n\right\rvert>M\right)=0.\]
	We write \(Z_n=\mathcal{O}_P(r_n)\) if \(\frac{Z_n}{r_n}=\mathcal{O}_P(1)\). 
\end{definition}
Clearly, if \(r_n\rightarrow0\) as \(n\rightarrow\infty\) and \(Z_n=\mathcal{O}_P(r_n)\), then \(Z_n\stackrel{P}{\rightarrow}0\). 

The simple Chebyshev's inequality is a very well-known and widely-used inequality; for reference, see, for example, \citet[p.8, Corollary 1.2.5]{vershynin2018high}. 
\begin{lemma}[Chebyshev's inequality]
	Let \(Z\) be a real-valued random variable. Then for all \(a>0\), 
	\[P\left(\left\lvert Z\right\rvert\geq a\right)\leq\frac{\mathbb{E}\left[Z^2\right]}{a^2}.\]
\end{lemma}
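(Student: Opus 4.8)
The plan is to obtain this as an instance of Markov's inequality applied to the nonnegative random variable \(Z^2\), via the standard indicator-function comparison. First I would dispose of the degenerate case: if \(\mathbb{E}[Z^2]=+\infty\) the asserted bound holds vacuously, so we may assume \(\mathbb{E}[Z^2]<\infty\). Now fix \(a>0\) and set \(A=\{\,\lvert Z\rvert\geq a\,\}\), which is an event since \(Z\) is measurable. On \(A\) we have \(Z^2\geq a^2\) pointwise, while on \(\Omega\setminus A\) we have \(a^2\mathbf{1}_A=0\leq Z^2\); combining the two cases yields the pointwise inequality \(a^2\mathbf{1}_A\leq Z^2\) on all of \(\Omega\).

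The next step is to integrate this inequality over \(\Omega\) against \(P\). Monotonicity of the expectation gives \(a^2 P(A)=\mathbb{E}[a^2\mathbf{1}_A]\leq\mathbb{E}[Z^2]\), and dividing both sides by \(a^2>0\) produces \(P(\lvert Z\rvert\geq a)\leq\mathbb{E}[Z^2]/a^2\), which is the claim. Equivalently, one can reorganise the same computation as \(\mathbb{E}[Z^2]\geq\mathbb{E}[Z^2\mathbf{1}_A]\geq a^2\,\mathbb{E}[\mathbf{1}_A]=a^2 P(A)\).

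There is no genuine obstacle here; the statement is elementary and is included only for later reference. The few points worth a remark are that \(\{\lvert Z\rvert\geq a\}\) is measurable (immediate from \(Z\) being a random variable), that \(\mathbb{E}[Z^2]\) is always well-defined as an element of \([0,\infty]\) because \(Z^2\geq0\), and that no integrability assumption on \(Z\) itself is required — the bound is informative precisely when its right-hand side happens to be finite.
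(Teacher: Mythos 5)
Your proof is correct and is essentially the paper's own argument: both establish the pointwise bound \(a^2\mathbf{1}_{\{\lvert Z\rvert\geq a\}}\leq Z^2\) and take expectations, and your final reorganised chain \(\mathbb{E}[Z^2]\geq\mathbb{E}[Z^2\mathbf{1}_A]\geq a^2P(A)\) is exactly the displayed inequality in the paper. The only difference is your explicit (and harmless) handling of the case \(\mathbb{E}[Z^2]=\infty\), which the paper leaves implicit.
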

\begin{proof}
	See that
	\[\mathbb{E}\left[Z^2\right]=\mathbb{E}\left[\mathbf{1}_{\lvert Z\rvert\geq a}Z^2\right]+\mathbb{E}\left[\mathbf{1}_{\lvert Z\rvert<a}Z^2\right]\geq\mathbb{E}\left[\mathbf{1}_{\lvert Z\rvert\geq a}Z^2\right]\geq\mathbb{E}\left[\mathbf{1}_{\lvert Z\rvert\geq a}a^2\right]=a^2P\left(\left\lvert Z\right\rvert\geq a\right),\]
	from which the result follows.
\end{proof}

\section{Universal Consistency}\label{Suniversalconsistency}
Our goal in this section is to investigate the convergence to 0 in probability of
\[R\left(\hat{f}_{n,\lambda}\right)-R\left(f^*\right)=\mathbb{E}\left[\left\lVert\hat{f}_{n,\lambda}(X)-f^*(X)\right\rVert^2_\mathcal{Y}\right]=\left\lVert\iota\hat{f}_{n,\lambda}-f^*\right\rVert^2_2,\]
where the equality comes from Lemma \ref{Lbiasvariance}. We first consider the case where the measure is fixed, i.e. the distributions \(P_{XY}\), \(P_X\) and \(P_Y\), the regression function \(f^*\), the function space \(L^2(\mathcal{X},P_X;\mathcal{Y})\) as well as the operator \(\iota\), are fixed. In Section \ref{SSuniformrates}, we will consider a uniform rate of convergence over a class of distributions. 

We split the above using the triangle inequality into estimation and approximation errors:
\[\left\lVert\iota\hat{f}_{n,\lambda}-f^*\right\rVert_2\leq\left\lVert\iota\hat{f}_{n,\lambda}-\iota f_\lambda\right\rVert_2+\left\lVert\iota f_\lambda-f^*\right\rVert_2.\]
Proposition \ref{PFlambdaconvergestoFstar} shows, under the assumption that \(\mathcal{H}\) is dense in \(L^2(\mathcal{X},P_X;\mathcal{Y})\), the convergence of the second term to 0 as \(\lambda\rightarrow0\), and Proposition \ref{PFhatconvergestoFlambda} shows the convergence of the first term in probability to 0 as \(n\rightarrow\infty\) and \(\lambda\rightarrow0\). Theorem \ref{Tuniversalconsistency} then brings them together to show the consistency of \(\hat{f}_{n,\lambda}\). 
\begin{proposition}[Approximation Error]\label{PFlambdaconvergestoFstar}
	If \(\iota\mathcal{H}\) is dense in \(L^2(\mathcal{X},P_X;\mathcal{Y})\), then \(\lVert f^*-\iota f_\lambda\rVert^2_2\rightarrow0\) as \(\lambda\rightarrow0\). 
\end{proposition}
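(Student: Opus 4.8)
The plan is to combine the closed form for $f_\lambda$ from Lemma \ref{Lriskminimisers}(i) with the spectral theorem for the bounded, self-adjoint, positive (but in general non-compact) operator $T \vcentcolon= \iota\circ\iota^{*}$ on $L^{2}(\mathcal{X},P_{X};\mathcal{Y})$. First I would use Lemma \ref{Lriskminimisers}(i) to write $\iota f_{\lambda} = \iota\circ\iota^{*}(\iota\circ\iota^{*}+\lambda\textnormal{Id}_{2})^{-1}f^{*} = T(T+\lambda\textnormal{Id}_{2})^{-1}f^{*}$, and then simplify the residual:
\[
f^{*}-\iota f_{\lambda} = \bigl(\textnormal{Id}_{2}-T(T+\lambda\textnormal{Id}_{2})^{-1}\bigr)f^{*} = \lambda\,(T+\lambda\textnormal{Id}_{2})^{-1}f^{*},
\]
so that the claim reduces to showing $\lVert\lambda(T+\lambda\textnormal{Id}_{2})^{-1}f^{*}\rVert_{2}\to0$ as $\lambda\to0$.

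Next I would invoke the spectral theorem \citep[Chapter 7]{hall2013quantum} to obtain a projection-valued measure $E$ on the Borel sets of $\sigma(T)\subseteq[0,\lVert T\rVert_{\textnormal{op}}]$ with $T=\int t\,dE(t)$. Letting $\mu(\cdot)\vcentcolon=\langle E(\cdot)f^{*},f^{*}\rangle_{2}$ be the associated scalar spectral measure --- a finite Borel measure of total mass $\lVert f^{*}\rVert_{2}^{2}$ --- the bounded Borel functional calculus gives
\[
\bigl\lVert\lambda(T+\lambda\textnormal{Id}_{2})^{-1}f^{*}\bigr\rVert_{2}^{2} = \int_{[0,\lVert T\rVert_{\textnormal{op}}]}\Bigl(\frac{\lambda}{t+\lambda}\Bigr)^{2}\,d\mu(t).
\]
The integrand is bounded above by $1$ uniformly in $\lambda>0$, and for each fixed $t$ it converges to $\mathbf{1}_{\{t=0\}}$ as $\lambda\to0$; since $\mu$ is finite, dominated convergence yields $\lim_{\lambda\to0}\lVert\lambda(T+\lambda\textnormal{Id}_{2})^{-1}f^{*}\rVert_{2}^{2} = \mu(\{0\}) = \lVert E(\{0\})f^{*}\rVert_{2}^{2}$, with $E(\{0\})$ the orthogonal projection onto $\ker T$.

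It then remains to show $E(\{0\})f^{*}=0$, which I would do by proving $\ker T=\{0\}$: if $Tg=0$ then $\lVert\iota^{*}g\rVert_{\mathcal{H}}^{2}=\langle\iota\circ\iota^{*}g,g\rangle_{2}=0$, so $\ker T=\ker\iota^{*}=(\Image\iota)^{\perp}=(\iota\mathcal{H})^{\perp}$, and this is trivial exactly because $\iota\mathcal{H}$ is dense in $L^{2}(\mathcal{X},P_{X};\mathcal{Y})$ by hypothesis. Hence $E(\{0\})=0$ and the limit is $0$, which proves the proposition. The main obstacle is the non-compactness of $T$: one cannot diagonalise it against a discrete eigenbasis and sum term by term, so the whole argument must be routed through the continuous spectral measure $E$ and its scalar version $\mu$ rather than through an eigenexpansion; once the scalar integral representation is in hand, the dominated-convergence step is routine. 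A secondary, minor point is the bookkeeping identifying $\ker(\iota\circ\iota^{*})$ with $\ker\iota^{*}$ and translating denseness of $\iota\mathcal{H}$ into triviality of that kernel.
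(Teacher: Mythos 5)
Your proof is correct, but it takes a genuinely different route from the paper. The paper's argument is purely variational and avoids spectral theory altogether: by denseness it picks some \(f_\epsilon\in\mathcal{H}\) with \(\lVert\iota f_\epsilon-f^*\rVert^2_2\leq\epsilon/2\), and then uses only the optimality of \(f_\lambda\) for \(R_\lambda\) (via the chain \(\lVert f^*-\iota f_\lambda\rVert^2_2=R(f_\lambda)-R(f^*)\leq R_\lambda(f_\epsilon)-R(f_\epsilon)+\epsilon/2=\lambda\lVert f_\epsilon\rVert^2_\mathcal{H}+\epsilon/2\)) to conclude for \(\lambda\leq\epsilon/(2\lVert f_\epsilon\rVert^2_\mathcal{H})\). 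Your argument instead works from the closed form \(f^*-\iota f_\lambda=\lambda(T+\lambda\textnormal{Id}_2)^{-1}f^*\) with \(T=\iota\circ\iota^*\), pushes the norm through the scalar spectral measure \(\mu(\cdot)=\langle E(\cdot)f^*,f^*\rangle_2\), applies dominated convergence, and identifies the residual limit \(\lVert E(\{0\})f^*\rVert^2_2\) as zero because \(\ker T=\ker\iota^*=(\iota\mathcal{H})^\perp=\{0\}\) by denseness; each of these steps is sound, including the identification of \(E(\{0\})\) with the projection onto \(\ker T\), and your insistence on the non-compact spectral theorem rather than an eigenexpansion is exactly the right caution given that the paper explicitly shows \(\iota\circ\iota^*\) need not be compact. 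The trade-off: the paper's route is more elementary (it defers all spectral machinery to the estimation error in Proposition \ref{PFhatconvergestoFlambda}, consistent with its minimal-assumptions aim), while your route yields the explicit residual \(\lambda(T+\lambda\textnormal{Id}_2)^{-1}f^*\), which is the natural starting point for quantitative approximation rates under source conditions on \(f^*\) --- something the soft variational argument cannot provide.
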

\begin{proof}
	Take an arbitrary \(\epsilon>0\). By the denseness of \(\iota\mathcal{H}\) in \(L^2(\mathcal{X},P_X;\mathcal{Y})\), there exists some \(f_\epsilon\in\mathcal{H}\) such that \(R(f_\epsilon)-R(f^*)=\lVert\iota f_\epsilon-f^*\rVert^2_2\leq\frac{\epsilon}{2}\). Then see that
	\begin{alignat*}{3}
		\left\lVert f^*-\iota f_\lambda\right\rVert^2_2&=R(f_\lambda)-R(f^*)&&\text{by Lemma \ref{Lbiasvariance}}\\
		&\leq R_\lambda(f_\lambda)-R(f^*)&&\text{since }R_\lambda(f)\geq R(f)\text{ for all }f\in\mathcal{H}\\
		&=R_\lambda(f_\lambda)-R_\lambda(f_\epsilon)+R_\lambda(f_\epsilon)-R(f_\epsilon)+R(f_\epsilon)-R(f^*)\\
		&\leq R_\lambda(f_\epsilon)-R(f_\epsilon)+R(f_\epsilon)-R(f^*)&&\text{since }f_\lambda\text{ minimises }R_\lambda\text{ in }\mathcal{H}\\
		&\leq R_\lambda(f_\epsilon)-R(f_\epsilon)+\frac{\epsilon}{2}&&\text{by the choice of }f_\epsilon\\
		&=\lambda\left\lVert f_\epsilon\right\rVert^2_\mathcal{H}+\frac{\epsilon}{2}&&\text{by the definition of }R_\lambda.
	\end{alignat*}
	Now if \(\lambda\leq\frac{\epsilon}{2\lVert f_\epsilon\rVert^2_\mathcal{H}}\), then
	\[\left\lVert f^*-\iota f_\lambda\right\rVert^2_2\leq\epsilon,\]
	as required.
\end{proof}
\begin{proposition}[Estimation Error]\label{PFhatconvergestoFlambda}
	Take any \(\delta>0\). Then
	\[P\left(\left\lVert\hat{f}_{n,\lambda}-f_\lambda\right\rVert^2_\mathcal{H}\geq\frac{B\mathbb{E}\left[\left\lVert Y\right\rVert^2_\mathcal{Y}\right]}{n\lambda^2\delta}\right)\leq\delta.\]
	In particular, if \(\lambda=\lambda_n\) depends on \(n\) and converges to 0 at a slower rate than \(\mathcal{O}(n^{-1/2})\), then
	\[\left\lVert\hat{f}_{n,\lambda_n}-f_{\lambda_n}\right\rVert^2_\mathcal{H}\stackrel{P}{\rightarrow}0.\]
\end{proposition}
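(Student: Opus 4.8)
The plan is the classical integral-operator decomposition, arranged so that the only probabilistic input is a second-moment bound for an i.i.d.\ average of $\mathcal{H}$-valued random variables. For brevity write $A := \iota^*\circ\iota$ and $A_\mathbf{X} := nS^*_\mathbf{X}\circ S_\mathbf{X}$ (both positive and self-adjoint on $\mathcal{H}$), and $b := \iota^*f^*$, $b_\mathbf{X} := S^*_\mathbf{X}\mathbf{Y}$, so that by Lemma \ref{Lriskminimisers} we have $(A+\lambda\mathrm{Id}_\mathcal{H})f_\lambda = b$ and $(A_\mathbf{X}+\lambda\mathrm{Id}_\mathcal{H})\hat f_{n,\lambda} = b_\mathbf{X}$.

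First I would obtain a workable expression for the error. Applying $(A_\mathbf{X}+\lambda\mathrm{Id}_\mathcal{H})$ to $\hat f_{n,\lambda}-f_\lambda$ and then using the defining equation of $f_\lambda$ to replace $\lambda f_\lambda$ by $b - Af_\lambda$ gives
\[(A_\mathbf{X}+\lambda\mathrm{Id}_\mathcal{H})(\hat f_{n,\lambda}-f_\lambda) = (b_\mathbf{X} - A_\mathbf{X}f_\lambda) - (b - Af_\lambda).\]
Now set $\zeta_i := K(X_i,\cdot)\bigl(Y_i - f_\lambda(X_i)\bigr)\in\mathcal{H}$ for $i=1,\dots,n$. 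These are i.i.d.; by the formulas for $S^*_\mathbf{X}$ and $S_\mathbf{X}$ one checks $\frac{1}{n}\sum_{i=1}^{n}\zeta_i = b_\mathbf{X} - A_\mathbf{X}f_\lambda$; and conditioning on $X_i$, using $\mathbb{E}[Y\mid X]=f^*(X)$ together with Lemma \ref{Lintegralexpression}, gives $\mathbb{E}[\zeta_1] = \iota^*f^* - \iota^*\iota f_\lambda = b - Af_\lambda$. Hence
\[\hat f_{n,\lambda}-f_\lambda = (A_\mathbf{X}+\lambda\mathrm{Id}_\mathcal{H})^{-1}\left(\frac{1}{n}\sum_{i=1}^{n}\zeta_i - \mathbb{E}[\zeta_1]\right).\]

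It then remains to bound the two factors. Since $A_\mathbf{X}$ is positive and self-adjoint, $\lVert(A_\mathbf{X}+\lambda\mathrm{Id}_\mathcal{H})^{-1}\rVert_\mathrm{op}\le\lambda^{-1}$. For the centred average, independence gives $\mathbb{E}\bigl\lVert\frac{1}{n}\sum_i\zeta_i - \mathbb{E}[\zeta_1]\bigr\rVert_\mathcal{H}^2 = \frac{1}{n}\mathbb{E}\lVert\zeta_1-\mathbb{E}[\zeta_1]\rVert_\mathcal{H}^2 \le \frac{1}{n}\mathbb{E}\lVert\zeta_1\rVert_\mathcal{H}^2$, and the reproducing property, Cauchy--Schwarz and Assumption \ref{Aseparablebounded} give $\lVert\zeta_1\rVert_\mathcal{H}^2 = \langle Y_1-f_\lambda(X_1),K(X_1,X_1)(Y_1-f_\lambda(X_1))\rangle_\mathcal{Y} \le B\lVert Y_1-f_\lambda(X_1)\rVert_\mathcal{Y}^2$, whence $\mathbb{E}\lVert\zeta_1\rVert_\mathcal{H}^2 \le B\,R(f_\lambda)$. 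Finally, since $0\in\mathcal{H}$ and $f_\lambda$ minimises $R_\lambda$ over $\mathcal{H}$, $R(f_\lambda)\le R_\lambda(f_\lambda)\le R_\lambda(0)=\mathbb{E}\lVert Y\rVert_\mathcal{Y}^2$. Combining the three estimates gives $\mathbb{E}\lVert\hat f_{n,\lambda}-f_\lambda\rVert_\mathcal{H}^2 \le B\mathbb{E}\lVert Y\rVert_\mathcal{Y}^2/(n\lambda^2)$, and Chebyshev's inequality applied to the nonnegative variable $\lVert\hat f_{n,\lambda}-f_\lambda\rVert_\mathcal{H}$ with $a^2 = B\mathbb{E}\lVert Y\rVert_\mathcal{Y}^2/(n\lambda^2\delta)$ yields the stated tail bound. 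The second assertion is then immediate: $\lambda_n$ tending to $0$ more slowly than $n^{-1/2}$ means $n\lambda_n^2\to\infty$, so for each fixed $\delta>0$ the right-hand side of the tail bound tends to $0$, which is precisely convergence in probability as defined in Section \ref{SSconvergence}.

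I do not anticipate a genuine obstacle; the one point requiring care is the bookkeeping in the error decomposition --- checking that $\frac{1}{n}\sum_i\zeta_i$ and $\mathbb{E}[\zeta_1]$ indeed reassemble into $b_\mathbf{X}-A_\mathbf{X}f_\lambda$ and $b-Af_\lambda$ respectively, which is where Lemma \ref{Lintegralexpression} and the identity $\mathbb{E}[Y\mid X]=f^*(X)$ enter --- after which only the resolvent bound and a one-line variance computation remain.
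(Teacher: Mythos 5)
Your proposal is correct and follows essentially the same route as the paper: the same decomposition $(A_\mathbf{X}+\lambda\mathrm{Id}_\mathcal{H})(\hat f_{n,\lambda}-f_\lambda)=(b_\mathbf{X}-A_\mathbf{X}f_\lambda)-(b-Af_\lambda)$, the resolvent bound $\lambda^{-1}$, the i.i.d.\ variance computation for $\zeta_i=K(X_i,\cdot)(Y_i-f_\lambda(X_i))$, the bound $R(f_\lambda)\leq R_\lambda(0)=\mathbb{E}[\lVert Y\rVert^2_\mathcal{Y}]$, and Chebyshev. The only cosmetic differences are that you assert the resolvent bound directly from positivity and self-adjointness where the paper invokes the spectral theorem for non-compact operators, and you bound $\lVert\zeta_1\rVert^2_\mathcal{H}$ via the reproducing property rather than via the operator norm of $S^*_X$ from Lemma \ref{Lbound}(ii).
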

\begin{proof}
	By Lemma \ref{Lriskminimisers}, we can write
	\begin{alignat*}{2}
		\hat{f}_{n,\lambda}-f_\lambda&=(nS^*_\mathbf{X}\circ S_\mathbf{X}+\lambda\text{Id}_\mathcal{H})^{-1}S^*_\mathbf{X}\mathbf{Y}-(nS^*_\mathbf{X}\circ S_\mathbf{X}+\lambda\text{Id}_\mathcal{H})^{-1}(nS^*_\mathbf{X}\circ S_\mathbf{X}+\lambda\text{Id}_\mathcal{H})f_\lambda\\
		&=(nS^*_\mathbf{X}\circ S_\mathbf{X}+\lambda\text{Id}_\mathcal{H})^{-1}\left(S^*_\mathbf{X}\mathbf{Y}-nS^*_\mathbf{X}\circ S_\mathbf{X}f_\lambda-\lambda f_\lambda\right)\\
		&=\left(nS^*_\mathbf{X}\circ S_\mathbf{X}+\lambda\text{Id}_\mathcal{H}\right)^{-1}\left(S^*_\mathbf{X}\mathbf{Y}-nS^*_\mathbf{X}\circ S_\mathbf{X}f_\lambda-\iota^*\left(f^*-\iota f_\lambda\right)\right).\tag{*}
	\end{alignat*}
	Write \(\sigma\) for the spectrum of \(nS^*_\mathbf{X}\circ S_\mathbf{X}\). Then by the spectral theorem for (non-compact) self-adjoint operators \citep[p.141, Theorem 7.12]{hall2013quantum}, there exists a unique projection-valued measure \(\mu\) on the Borel \(\sigma\)-algebra of \(\sigma\) such that
	\[nS^*_\mathbf{X}\circ S_\mathbf{X}=\int_\sigma\gamma d\mu(\gamma),\]
	whence, using the properties of operator-valued integration \citep[p.139, Proposition 7.11]{hall2013quantum} and fact that \(\sigma\subseteq[0,\infty)\) \citep[p.242, Theorem 3.8]{conway1990course}, we can bound its operator norm by
	\[\left\lVert\left(nS^*_\mathbf{X}\circ S_\mathbf{X}+\lambda\text{Id}_\mathcal{H}\right)^{-1}\right\rVert_\textnormal{op}=\left\lVert\int_\sigma\frac{1}{\gamma+\lambda}d\mu(\gamma)\right\rVert_\textnormal{op}\leq\sup_{\gamma\in\sigma}\left\lvert\frac{1}{\gamma+\lambda}\right\rvert\leq\frac{1}{\lambda}.\]
	Then returning to (*) and taking the \(\mathcal{H}\)-norm of both sides, we have
	\[\left\lVert\hat{f}_{n,\lambda}-f_\lambda\right\rVert_\mathcal{H}\leq\frac{1}{\lambda}\left\lVert S^*_\mathbf{X}\mathbf{Y}-nS^*_\mathbf{X}\circ S_\mathbf{X}f_\lambda-\iota^*\left(f^*-\iota f_\lambda\right)\right\rVert_\mathcal{H}.\]
	Hence, for any arbitrary \(\epsilon>0\), by Chebyshev's inequality,
	\begin{alignat*}{2}
		P\left(\left\lVert\hat{f}_{n,\lambda}-f_\lambda\right\rVert_\mathcal{H}\geq\epsilon\right)&\leq P\left(\frac{1}{\lambda}\left\lVert S^*_\mathbf{X}\mathbf{Y}-nS^*_\mathbf{X}\circ S_\mathbf{X}f_\lambda-\iota^*\left(f^*-\iota f_\lambda\right)\right\rVert_\mathcal{H}\geq\epsilon\right)\\
		&\leq\frac{1}{\lambda^2\epsilon^2}\mathbb{E}\left[\left\lVert S^*_\mathbf{X}\mathbf{Y}-nS^*_\mathbf{X}\circ S_\mathbf{X}f_\lambda-\iota^*\left(f^*-\iota f_\lambda\right)\right\rVert_\mathcal{H}^2\right].
	\end{alignat*}
	Here, letting \(Z=S^*_XY-S^*_X\circ S_Xf_\lambda\) and \(Z_i=S^*_{X_i}Y_i-S^*_{X_i}\circ S_{X_i}f_\lambda\), Lemma \ref{Lintegralexpression}(ii) and (iii) tells us that the integral is in fact simply \(\mathbb{E}\left[\left\lVert\frac{1}{n}\sum^n_{i=1}Z_i-\mathbb{E}\left[Z\right]\right\rVert^2_\mathcal{H}\right]\). Hence, 
	\begin{alignat*}{2}
		P\left(\left\lVert\hat{f}_{n,\lambda}-f_\lambda\right\rVert_\mathcal{H}\geq\epsilon\right)&\leq\frac{1}{n\lambda^2\epsilon^2}\mathbb{E}\left[\left\lVert S^*_XY-S^*_X\circ S_Xf_\lambda-\iota^*\left(f^*-\iota f_\lambda\right)\right\rVert_\mathcal{H}^2\right]\\
		&\leq\frac{1}{n\lambda^2\epsilon^2}\mathbb{E}\left[\left\lVert S^*_XY-S^*_X\circ S_Xf_\lambda\right\rVert_\mathcal{H}^2\right]\\
		&\leq\frac{B}{n\lambda^2\epsilon^2}\mathbb{E}\left[\left\lVert Y-f_\lambda\left(X\right)\right\rVert^2_\mathcal{Y}\right],
	\end{alignat*}
	by Lemma \ref{Lbound}(ii). Here, we use the fact that \(f_\lambda\) minimises \(R_\lambda\) in \(\mathcal{H}\), i.e. \(R_\lambda\left(f_\lambda\right)\leq R_\lambda\left(0\right)\), to see that
	\[\mathbb{E}\left[\left\lVert f_\lambda(X)-Y\right\rVert^2_\mathcal{Y}\right]\leq\mathbb{E}\left[\left\lVert f_\lambda(X)-Y\right\rVert^2_\mathcal{Y}\right]+\lambda\left\lVert f_\lambda\right\rVert^2_\mathcal{H}\leq\mathbb{E}\left[\left\lVert Y\right\rVert^2_\mathcal{Y}\right].\]
	Hence,
	\[P\left(\left\lVert\hat{f}_{n,\lambda}-f_\lambda\right\rVert_\mathcal{H}\geq\epsilon\right)\leq\frac{B\mathbb{E}\left[\left\lVert Y\right\rVert^2_\mathcal{Y}\right]}{n\lambda^2\epsilon^2},\]
	from which the result follows. 
\end{proof}
\begin{remark}
	Under additional assumptions on the underlying distribution, we can obtain tighter bounds in Proposition \ref{PFhatconvergestoFlambda}, by using exponential probabilistic inequalities like Bernstein's inequality, instead of Chebyshev's inequality like we did above. This is indeed done, for example, in \citet[Theorem 1]{smale2007learning} for real output spaces and \citet[Theorem 2]{singh2019kernel} for RKHS output spaces in the context of conditional mean embeddings, by assuming that \(Y\) is almost surely bounded, not just square integrable as we assumed in Assumption \ref{Afinitevariance}. 
\end{remark}
\begin{theorem}[Universal Consistency]\label{Tuniversalconsistency}
	Suppose \(\iota\mathcal{H}\) is dense in \(L^2(\mathcal{X},P_X;\mathcal{Y})\). Suppose that \(\lambda=\lambda_n\) depends on the sample size \(n\), and converges to 0 at a slower rate than \(\mathcal{O}(n^{-1/2})\). Then we have
	\[R\left(\hat{f}_{n,\lambda_n}\right)-R\left(f^*\right)=\mathbb{E}\left[\left\lVert\hat{f}_{n,\lambda_n}(X)-f^*(X)\right\rVert^2_\mathcal{Y}\right]=\left\lVert\iota\hat{f}_{n,\lambda_n}-f^*\right\rVert^2_2\stackrel{P}{\rightarrow}0.\]
\end{theorem}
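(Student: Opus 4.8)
The plan is to combine the two Propositions via the triangle-inequality split already set up in the text. Writing
\[
\left\lVert\iota\hat{f}_{n,\lambda_n}-f^*\right\rVert_2\leq\left\lVert\iota\right\rVert_\textnormal{op}\left\lVert\hat{f}_{n,\lambda_n}-f_{\lambda_n}\right\rVert_\mathcal{H}+\left\lVert\iota f_{\lambda_n}-f^*\right\rVert_2\leq\sqrt{B}\left\lVert\hat{f}_{n,\lambda_n}-f_{\lambda_n}\right\rVert_\mathcal{H}+\left\lVert\iota f_{\lambda_n}-f^*\right\rVert_2,
\]
where the bound \(\left\lVert\iota\right\rVert_\textnormal{op}\leq\sqrt{B}\) is the one established right after Lemma \ref{Lbound}. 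It then suffices to show that each summand on the right converges to \(0\) in probability; since the left-hand quantity is nonnegative, convergence of its square follows from convergence of the quantity itself, and then \(R(\hat f_{n,\lambda_n})-R(f^*)=\lVert\iota\hat f_{n,\lambda_n}-f^*\rVert_2^2\) by Lemma \ref{Lbiasvariance}.

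First I would handle the approximation term: because \(\lambda_n\to0\) and \(\iota\mathcal{H}\) is dense in \(L^2(\mathcal{X},P_X;\mathcal{Y})\), Proposition \ref{PFlambdaconvergestoFstar} gives \(\lVert f^*-\iota f_{\lambda_n}\rVert_2^2\to0\) deterministically, hence \(\lVert f^*-\iota f_{\lambda_n}\rVert_2\to0\); a deterministic null sequence trivially converges in probability. Next, the estimation term: since \(\lambda_n\to0\) slower than \(\mathcal{O}(n^{-1/2})\), i.e. \(n\lambda_n^2\to\infty\), Proposition \ref{PFhatconvergestoFlambda} gives \(\lVert\hat f_{n,\lambda_n}-f_{\lambda_n}\rVert_\mathcal{H}^2\stackrel{P}{\to}0\), and by the continuous mapping theorem (or directly, \(\sqrt{t}\) is continuous at \(0\)) also \(\lVert\hat f_{n,\lambda_n}-f_{\lambda_n}\rVert_\mathcal{H}\stackrel{P}{\to}0\).

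To conclude, I would invoke the elementary fact that the sum of a sequence converging to \(0\) in probability and a deterministic sequence converging to \(0\) converges to \(0\) in probability: given \(\epsilon,\delta>0\), choose \(N_1\) so that \(\lVert\iota f_{\lambda_n}-f^*\rVert_2<\epsilon/(2)\) for \(n\geq N_1\), and choose \(N_2\) so that \(P(\sqrt{B}\lVert\hat f_{n,\lambda_n}-f_{\lambda_n}\rVert_\mathcal{H}\geq\epsilon/2)\leq\delta\) for \(n\geq N_2\); then for \(n\geq\max(N_1,N_2)\) we get \(P(\lVert\iota\hat f_{n,\lambda_n}-f^*\rVert_2\geq\epsilon)\leq\delta\). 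Squaring (again by continuity of \(t\mapsto t^2\)) yields \(\lVert\iota\hat f_{n,\lambda_n}-f^*\rVert_2^2\stackrel{P}{\to}0\), which is the claim. Honestly none of the steps is a serious obstacle here — the two Propositions do all the work and what remains is purely the bookkeeping of assembling an \(\epsilon/2\)-argument; the only point requiring a line of care is translating between the squared norms appearing in the statements and the unsquared norms that the triangle inequality naturally produces, which is why I would phrase everything in terms of the unsquared \(\lVert\cdot\rVert_2\) first and square only at the very end.
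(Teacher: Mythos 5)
Your proposal is correct and follows essentially the same route as the paper: the same decomposition into estimation and approximation errors, with Propositions \ref{PFhatconvergestoFlambda} and \ref{PFlambdaconvergestoFstar} doing all the work. The only cosmetic difference is that the paper applies \(\lVert a+b\rVert^2\leq2\lVert a\rVert^2+2\lVert b\rVert^2\) to work with squared norms throughout, whereas you use the plain triangle inequality and square at the end; both are fine.
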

\begin{proof}
	The simple inequality \(\lVert a+b\rVert^2\leq2\lVert a\rVert^2+2\lVert b\rVert^2\) holds in any Hilbert space. Using this, we see that
	\begin{alignat*}{2}
		\left\lVert\iota\hat{f}_{n,\lambda_n}-f^*\right\rVert^2_2&\leq2\left\lVert\iota\hat{f}_{n,\lambda_n}-\iota f_{\lambda_n}\right\rVert^2_2+2\left\lVert\iota f_{\lambda_n}-f^*\right\rVert_2^2\\
		&\leq2B\left\lVert\hat{f}_{n,\lambda_n}-f_{\lambda_n}\right\rVert_\mathcal{H}^2+2\left\lVert\iota f_{\lambda_n}-f^*\right\rVert_2^2,
	\end{alignat*}
	where we used the discussion after Lemma \ref{Lbound} in the last inequality. Here, the second term converges to 0 as \(\lambda_n\rightarrow0\) by Proposition \ref{PFlambdaconvergestoFstar}, and the first term converges in probability to 0 by Proposition \ref{PFhatconvergestoFlambda}. Hence,
	\[\left\lVert\iota\hat{f}_{n,\lambda_n}-f^*\right\rVert_2^2\stackrel{P}{\rightarrow}0\]
	as required.
\end{proof}

\subsection{Uniform Rates in the Well-Specified Case}\label{SSuniformrates}
In our work above, possible bottlenecks are \(\mathbb{E}[\lVert Y\rVert^2_\mathcal{Y}]\) in Proposition \ref{PFhatconvergestoFlambda} being arbitrarily large, or \(f_\epsilon\) in the proof of Proposition \ref{PFlambdaconvergestoFstar} having arbitrarily large norm in \(\mathcal{H}\). In the next result, we consider a class of measures over which the rate of convergence is uniform. In particular, any measure in this class of measures is conditioned to have the conditional expectation \(f^*\) of \(Y\) given \(X\) in \(\mathcal{H}\), i.e. there exists some \(f^*_\mathcal{H}\in\mathcal{H}\) such that \(\iota f^*_\mathcal{H}=f^*\). This is known as the \textit{well-specified} case \citep[p.2]{szabo2016learning}. 
\begin{theorem}\label{Trate}
	For constants \(M,C>0\), define \(\mathcal{P}(M,C)\) as the class of measures such that
	\begin{enumerate}[(i)]
		\item \(\mathbb{E}\left[\left\lVert Y\right\rVert^2_\mathcal{Y}\right]\leq M\), and
		\item \(f^*=\iota f^*_\mathcal{H}\) for some \(f^*_\mathcal{H}\in\mathcal{H}\) with \(\left\lVert f^*_\mathcal{H}\right\rVert^2_\mathcal{H}\leq C\). 
	\end{enumerate}
	Let \(\mathcal{H}\) be dense in \(L^2(\mathcal{X},P_X;\mathcal{Y})\) for all \(P\in\mathcal{P}(M,C)\). Then
	\[\sup_{P\in\mathcal{P}(M,C)}P\left(\left\lVert\iota\hat{f}_{n,\lambda}-f^*\right\rVert^2_2\geq\frac{2B^2M}{n\lambda^2\delta}+2\lambda C\right)\leq\delta.\]
	In particular, if \(\lambda=\lambda_n\) depends on the sample size \(n\) and converges to 0 at the rate of \(\mathcal{O}(n^{-1/4})\), then \(R(\hat{f}_{n,\lambda_n})-R(f^*)=\mathcal{O}_P(n^{-1/4})\) uniformly over the class \(\mathcal{P}(M,C)\) of measures. 
\end{theorem}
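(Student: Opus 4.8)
The plan is to reuse the estimation/approximation split from the proof of Theorem~\ref{Tuniversalconsistency}, but now to make both pieces quantitative and, crucially, uniform over $\mathcal{P}(M,C)$. Exactly as there, the inequality $\lVert a+b\rVert^2\leq 2\lVert a\rVert^2+2\lVert b\rVert^2$ together with $\lVert\iota(\cdot)\rVert_2\leq\sqrt{B}\,\lVert\cdot\rVert_\mathcal{H}$ (the discussion after Lemma~\ref{Lbound}) gives the deterministic bound
\[
\left\lVert\iota\hat f_{n,\lambda}-f^*\right\rVert^2_2\;\leq\;2B\left\lVert\hat f_{n,\lambda}-f_\lambda\right\rVert^2_\mathcal{H}+2\left\lVert\iota f_\lambda-f^*\right\rVert^2_2 .
\]
So it suffices to bound the approximation term $\lVert\iota f_\lambda-f^*\rVert^2_2$ deterministically by $\lambda C$, and to control the estimation term $\lVert\hat f_{n,\lambda}-f_\lambda\rVert^2_\mathcal{H}$ in probability by $BM/(n\lambda^2\delta)$; adding $2B$ times the latter to $2$ times the former reproduces the claimed threshold $2B^2M/(n\lambda^2\delta)+2\lambda C$.

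For the approximation term I would use well-specification in place of the qualitative denseness argument. Since $f^*=\iota f^*_\mathcal{H}$ with $f^*_\mathcal{H}\in\mathcal{H}$, Lemma~\ref{Lbiasvariance} and the fact that $f_\lambda$ minimises $R_\lambda$ over $\mathcal{H}$ give
\[
\left\lVert\iota f_\lambda-f^*\right\rVert^2_2=R(f_\lambda)-R(f^*)\leq R_\lambda(f_\lambda)-R(f^*)\leq R_\lambda(f^*_\mathcal{H})-R(f^*)=\left\lVert\iota f^*_\mathcal{H}-f^*\right\rVert^2_2+\lambda\lVert f^*_\mathcal{H}\rVert^2_\mathcal{H},
\]
and the first term on the right vanishes because $\iota f^*_\mathcal{H}=f^*$, leaving $\lVert\iota f_\lambda-f^*\rVert^2_2\leq\lambda\lVert f^*_\mathcal{H}\rVert^2_\mathcal{H}\leq\lambda C$ by assumption (ii). This bound holds for every $P\in\mathcal{P}(M,C)$ with the same constant $C$, and in fact does not invoke the denseness hypothesis at all (well-specification already pins $f^*$ inside $\iota\mathcal{H}$).

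For the estimation term, Proposition~\ref{PFhatconvergestoFlambda} states $P\big(\lVert\hat f_{n,\lambda}-f_\lambda\rVert^2_\mathcal{H}\geq B\,\mathbb{E}[\lVert Y\rVert^2_\mathcal{Y}]/(n\lambda^2\delta)\big)\leq\delta$; since assumption (i) gives $\mathbb{E}[\lVert Y\rVert^2_\mathcal{Y}]\leq M$, monotonicity of probability upgrades this to $P\big(\lVert\hat f_{n,\lambda}-f_\lambda\rVert^2_\mathcal{H}\geq BM/(n\lambda^2\delta)\big)\leq\delta$, again uniformly over $\mathcal{P}(M,C)$. Combining with the two deterministic bounds above, the event $\{\lVert\iota\hat f_{n,\lambda}-f^*\rVert^2_2\geq 2B^2M/(n\lambda^2\delta)+2\lambda C\}$ forces $\lVert\hat f_{n,\lambda}-f_\lambda\rVert^2_\mathcal{H}\geq BM/(n\lambda^2\delta)$, so it is contained in an event of probability at most $\delta$ — no union bound is even needed. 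Taking the supremum over $P\in\mathcal{P}(M,C)$ yields the displayed inequality. For the final claim, choosing $\lambda_n\asymp n^{-1/4}$ makes both $1/(n\lambda_n^2)\asymp n^{-1/2}$ and $\lambda_n\asymp n^{-1/4}$, so for $n\geq1$ the threshold is bounded by $(\mathrm{const}(\delta))\,n^{-1/4}$; unpacking the definition of $\mathcal{O}_P$ (given $\eta>0$, set $\delta=\eta$ to obtain an $n$-independent $M_\eta$ with $\sup_{P\in\mathcal{P}(M,C)}P\big(n^{1/4}(R(\hat f_{n,\lambda_n})-R(f^*))\geq M_\eta\big)\leq\eta$ for all $n$) gives $R(\hat f_{n,\lambda_n})-R(f^*)=\mathcal{O}_P(n^{-1/4})$ uniformly over $\mathcal{P}(M,C)$. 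I do not expect a genuine obstacle here: the argument is an assembly of Lemma~\ref{Lbiasvariance}, Lemma~\ref{Lbound} and Propositions~\ref{PFlambdaconvergestoFstar}--\ref{PFhatconvergestoFlambda}; the only point worth highlighting is that well-specification replaces Proposition~\ref{PFlambdaconvergestoFstar}'s qualitative statement by the explicit linear-in-$\lambda$ bound, uniformly in $P$, which is exactly what makes the rate uniform.
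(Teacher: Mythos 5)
Your proposal is correct and follows essentially the same route as the paper: the same $2B\lVert\hat f_{n,\lambda}-f_\lambda\rVert^2_\mathcal{H}+2\lVert\iota f_\lambda-f^*\rVert^2_2$ decomposition, the same deterministic $\lambda C$ bound on the approximation error via well-specification, and the same appeal to Proposition~\ref{PFhatconvergestoFlambda} with $\mathbb{E}[\lVert Y\rVert^2_\mathcal{Y}]\leq M$. Your two small refinements — observing that the approximation bound is deterministic so no union bound is needed (the paper formally splits into two events, one of which has probability zero), and that the denseness hypothesis is not actually used — are both accurate but do not change the argument.
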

\begin{proof}
	First, see that the condition (ii) helps simplify the proof of Proposition \ref{PFlambdaconvergestoFstar}:
	\[\begin{split}
		\left\lVert\iota f_\lambda-f^*\right\rVert^2_2&=R(f_\lambda)-R(f^*_\mathcal{H})\\
		&\leq R_\lambda(f_\lambda)-R_\lambda(f^*_\mathcal{H})+R_\lambda(f^*_\mathcal{H})-R(f^*_\mathcal{H})\\
		&\leq\lambda\left\lVert f^*_\mathcal{H}\right\rVert_\mathcal{H}^2\\
		&\leq\lambda C.
	\end{split}\tag{*}\]
	Then using the inequality \(\lVert\iota\hat{f}_{n,\lambda}-f^*\rVert^2_2\leq2B\lVert\hat{f}_{n,\lambda}-f_\lambda\rVert_\mathcal{H}^2+2\lVert\iota f_\lambda-f^*\rVert^2_2\) as in the proof of Theorem \ref{Tuniversalconsistency}, 
	\begin{alignat*}{3}
		&\sup_{P\in\mathcal{P}(M,C)}P\left(\left\lVert\iota\hat{f}_{n,\lambda}-f^*\right\rVert^2_2>\frac{2B^2M}{n\lambda^2\delta}+2\lambda C\right)\\
		&\leq\sup_{P\in\mathcal{P}(M,C)}P\left(\left\lVert\hat{f}_{n,\lambda}-f_\lambda\right\rVert^2_\mathcal{H}>\frac{BM}{n\lambda^2\delta}\right)+\sup_{P\in\mathcal{P}(M,C)}P\left(\left\lVert f^*-\iota f_\lambda\right\rVert^2_2>\lambda C\right)\\
		&\leq\sup_{P\in\mathcal{P}(M,C)}P\left(\left\lVert\hat{f}_{n,\lambda}-f_\lambda\right\rVert^2_\mathcal{H}\geq\frac{B\mathbb{E}\left[\left\lVert Y\right\rVert_\mathcal{Y}^2\right]}{n\lambda^2\delta}\right)&&\text{by (*)}\\
		&\leq\delta&&\text{by Proposition \ref{PFhatconvergestoFlambda},}
	\end{alignat*}
	as required. 
\end{proof}

\bibliography{ref}
\bibliographystyle{abbrvnat}
\appendix
\section{The Fr\'echet Derivative and Critical Points of Extremum of Nonlinear operators}\label{Sfrechet}
In this appendix, we review some basic theory about the Fr\'echet derivative. We follow the exposition in \citet[Chapter 7]{precup2002methods}. Let \(X\) be a Banach space, \(U\subset X\) an open subset, \(E:U\rightarrow\mathbb{R}\) a functional and \(u\in U\) a given point. Write \(X^*\) for the dual space of \(X\), and write, for any \(x_1\in X\) and \(x_2^*\in X^*\), \(x_2^*(x_1)=(x_2^*,x_1)\). 
\begin{definition}[{\citet[p.97, Definition 7.1]{precup2002methods}}]\label{Dfrechet}
	\(E\) is said to be \textit{Fr\'echet differentiable} at \(u\) if there exists an \(E'(u)\in X^*\) such that
	\[E(u+v)-E(u)=\left(E'(u),v\right)+\omega(u,v)\]
	and
	\[\omega(u,v)=o\left(\left\lvert v\right\rvert\right),\text{ i.e. }\frac{\omega(u,v)}{\left\lvert v\right\rvert}\rightarrow0,\]
	as \(v\rightarrow0\). The element \(E'(u)\) is called the \textit{Fr\'echet derivative} of \(E\) at \(u\). 
\end{definition}
\begin{lemma}[{\citet[p.100, Example 7.2]{precup2002methods}}]\label{Lfrechetsquare}
	Suppose \(X\) is a Hilbert space, and
	\[E(u)=\frac{1}{2}\left\lvert u\right\rvert^2\qquad(u\in X).\]
	Then \(E\) is Fr\'echet differentiable in \(X\), its Fr\'echet derivative \(E':X\rightarrow X^*\) is continuous, and is given by
	\[\left(E'(u),v\right)=(u,v),\qquad v\in X.\]
\end{lemma}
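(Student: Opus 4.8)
The plan is to verify the definition of Fr\'echet differentiability (Definition \ref{Dfrechet}) directly, by expanding the squared norm through the inner product and then reading off the linear term as the derivative and the quadratic term as the remainder. Fix $u,v\in X$ and compute
\[E(u+v)-E(u)=\tfrac{1}{2}\left\lvert u+v\right\rvert^2-\tfrac{1}{2}\left\lvert u\right\rvert^2.\]
Expanding $\left\lvert u+v\right\rvert^2=(u+v,u+v)=\left\lvert u\right\rvert^2+2(u,v)+\left\lvert v\right\rvert^2$ in the (real) Hilbert space $X$ gives $E(u+v)-E(u)=(u,v)+\tfrac{1}{2}\left\lvert v\right\rvert^2$.

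Next I would identify the candidate derivative. The map $v\mapsto(u,v)$ is linear in $v$ and, by the Cauchy-Schwarz inequality, bounded since $\left\lvert(u,v)\right\rvert\leq\left\lvert u\right\rvert\left\lvert v\right\rvert$; hence it defines a genuine element $E'(u)\in X^*$ with $(E'(u),v)=(u,v)$, as claimed. The leftover term is $\omega(u,v)=\tfrac{1}{2}\left\lvert v\right\rvert^2$, and it satisfies $\omega(u,v)/\left\lvert v\right\rvert=\tfrac{1}{2}\left\lvert v\right\rvert\to0$ as $v\to0$. This is exactly the remainder condition in Definition \ref{Dfrechet}, so $E$ is Fr\'echet differentiable at every $u\in X$ with the stated derivative.

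Finally, for continuity of $E'\colon X\to X^*$, I would exploit that $u\mapsto E'(u)$ is itself linear (the inner product is linear in its first argument) and that, by the Riesz representation theorem, $\left\lVert E'(u)\right\rVert_{X^*}=\left\lvert u\right\rvert$. Therefore $\left\lVert E'(u_1)-E'(u_2)\right\rVert_{X^*}=\left\lVert E'(u_1-u_2)\right\rVert_{X^*}=\left\lvert u_1-u_2\right\rvert$, so $E'$ is an isometry and in particular Lipschitz, hence continuous. I do not expect any genuine obstacle here: the computation is routine, and the only point deserving a moment's care is confirming that $v\mapsto(u,v)$ really lies in $X^*$ (via the Cauchy-Schwarz bound) before invoking Riesz to evaluate its dual norm.
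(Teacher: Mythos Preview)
Your proof is correct: the expansion $E(u+v)-E(u)=(u,v)+\tfrac12\lvert v\rvert^2$ immediately identifies the linear part and an $o(\lvert v\rvert)$ remainder, and the isometry argument for continuity of $E'$ is fine. The paper itself gives no proof of this lemma (it is simply quoted from \citet[p.100, Example 7.2]{precup2002methods}), so there is nothing to compare against; your argument is exactly the standard one from that reference.
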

\begin{lemma}[{\citet[p.100, Example 7.3]{precup2002methods}}]\label{Lfrechetcomposition}
	Let \(X\) be a Hilbert space, \(Y\) a Banach space, \(H:X\rightarrow Y\) a bounded linear operator, and \(J:Y\rightarrow\mathbb{R}\) Fr\'echet differentiable in \(Y\). Then the functional \(JH:X\rightarrow\mathbb{R}\) is Fr\'echet differentiable in \(X\), and
	\[(JH)'=H^*J'H,\]
	where \(H^*\) is the adjoint of \(H\). 
\end{lemma}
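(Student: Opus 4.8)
The plan is to exploit the linearity of $H$ to convert an increment of $JH$ at $u$ into an increment of $J$ at the image point $H(u)$, and then to invoke the Fr\'echet differentiability of $J$ there, as supplied by the hypothesis. First I would use that $H$ is linear to write, for any $v\in X$,
\[
JH(u+v)-JH(u)=J\bigl(H(u)+H(v)\bigr)-J\bigl(H(u)\bigr).
\]
Applying Definition \ref{Dfrechet} to $J$ at the point $H(u)$, with increment $w=H(v)\in Y$, then yields
\[
J\bigl(H(u)+H(v)\bigr)-J\bigl(H(u)\bigr)=\bigl(J'(H(u)),H(v)\bigr)+\omega\bigl(H(u),H(v)\bigr),
\]
where, by the differentiability of $J$, the residual satisfies $\omega(H(u),w)=o(\lvert w\rvert_Y)$ as $w\to0$ in $Y$.

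Next I would identify the part that is linear in $v$ with the derivative claimed in the lemma. By the defining property of the adjoint $H^{*}:Y^{*}\to X^{*}$, namely $(H^{*}\phi,v)=(\phi,Hv)$ for every $\phi\in Y^{*}$ and $v\in X$, the linear term rewrites as
\[
\bigl(J'(H(u)),H(v)\bigr)=\bigl(H^{*}J'(H(u)),v\bigr).
\]
This exhibits $H^{*}J'(H(u))\in X^{*}$ as the candidate Fr\'echet derivative of $JH$ at $u$, which is precisely $(H^{*}J'H)(u)$ in the notation of the lemma. It therefore remains only to verify that the leftover term $\omega(H(u),H(v))$ is $o(\lvert v\rvert_X)$ as $v\to0$ in $X$; this is the heart of the matter.

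For this remainder estimate I would use the boundedness of $H$. Since $\lvert H(v)\rvert_Y\le\lVert H\rVert_{\textnormal{op}}\,\lvert v\rvert_X$, we have $H(v)\to0$ in $Y$ whenever $v\to0$ in $X$, so the $o$-property of $\omega$ becomes available. For directions with $H(v)\neq0$ I would factor
\[
\frac{\lvert\omega(H(u),H(v))\rvert}{\lvert v\rvert_X}=\frac{\lvert\omega(H(u),H(v))\rvert}{\lvert H(v)\rvert_Y}\cdot\frac{\lvert H(v)\rvert_Y}{\lvert v\rvert_X}\le\lVert H\rVert_{\textnormal{op}}\cdot\frac{\lvert\omega(H(u),H(v))\rvert}{\lvert H(v)\rvert_Y},
\]
and let $v\to0$: the first factor is uniformly bounded by $\lVert H\rVert_{\textnormal{op}}$, while the second tends to $0$ by the $o$-estimate for $\omega$ at $H(u)$. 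For directions with $H(v)=0$ the residual is simply $\omega(H(u),0)=0$, so they contribute nothing.

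The main obstacle, though a mild one, is exactly this final estimate: one must treat the degenerate directions $v\in\ker H$ separately (including the trivial case $H=0$, where $JH$ is constant and the asserted identity reduces to $0=0$), and phrase the $\varepsilon$-$\delta$ transfer so that the smallness guaranteed by the differentiability of $J$ at $H(u)$ propagates back through the bounded operator $H$ without circular dependence on $v$. Concretely, given $\varepsilon>0$ I would first extract $\eta>0$ from the $o$-property of $\omega$, then choose $\delta$ proportional to $\eta/(\lVert H\rVert_{\textnormal{op}}+1)$ so that $\lvert v\rvert_X<\delta$ forces $\lvert H(v)\rvert_Y<\eta$, at which point the displayed factorisation delivers the required bound. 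Assembling these pieces shows that $JH$ is Fr\'echet differentiable at $u$ with $(JH)'(u)=H^{*}J'(H(u))$, which is the claim $(JH)'=H^{*}J'H$.
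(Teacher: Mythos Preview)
Your proof is correct and is the standard chain-rule argument for Fr\'echet derivatives: push the increment through the linear map $H$, invoke the differentiability of $J$ at $H(u)$, identify the linear part via the adjoint, and control the remainder using $\lvert H(v)\rvert_Y\le\lVert H\rVert_{\textnormal{op}}\lvert v\rvert_X$ together with the case split on $\ker H$. There is nothing to compare against in the paper itself: the lemma is quoted without proof from \citet[p.100, Example 7.3]{precup2002methods}, so your argument simply supplies what the paper delegated to the reference.
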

\begin{lemma}[{\citet[p.103, Proposition 7.2]{precup2002methods}}]\label{Llocalextremum}
	If \(u_0\in U\) is a point of local extremum of \(E\) and \(E\) is Fr\'echet differentiable at \(u_0\), then \(E'(u_0)=0\). 
\end{lemma}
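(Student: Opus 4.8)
The plan is to reduce this infinite-dimensional statement to the classical single-variable Fermat theorem by probing $E$ along one direction at a time. Fix an arbitrary direction $v\in X$; I will show that $(E'(u_0),v)=0$, and since $v$ is arbitrary this yields $E'(u_0)=0$ as an element of $X^*$.

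First I would introduce the scalar function $\varphi(t)=E(u_0+tv)$ for real $t$ near $0$. Because $U$ is open and $u_0\in U$, there is some $\eta>0$ with $u_0+tv\in U$ whenever $\lvert t\rvert<\eta$, so $\varphi$ is well-defined on $(-\eta,\eta)$. Since $u_0$ is a local extremum of $E$, the point $t=0$ is a local extremum of $\varphi$.

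Next I would compute $\varphi'(0)$ directly from the Fr\'echet expansion in Definition \ref{Dfrechet}, substituting $v\mapsto tv$. Forming the difference quotient,
\[\frac{\varphi(t)-\varphi(0)}{t}=\frac{\left(E'(u_0),tv\right)+\omega(u_0,tv)}{t}=\left(E'(u_0),v\right)+\frac{\omega(u_0,tv)}{t}.\]
Since $\omega(u_0,w)=o\left(\lvert w\rvert\right)$ as $w\rightarrow0$ and $\lvert tv\rvert=\lvert t\rvert\,\lvert v\rvert$, the remainder $\omega(u_0,tv)/t$ tends to $0$ as $t\rightarrow0$ (trivially so if $v=0$). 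Hence $\varphi$ is differentiable at $0$ with $\varphi'(0)=\left(E'(u_0),v\right)$.

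Finally, the one-dimensional Fermat theorem---a function differentiable at an interior local extremum has vanishing derivative there---gives $\varphi'(0)=0$, so $\left(E'(u_0),v\right)=0$; letting $v$ range over all of $X$ completes the argument. The only step demanding care is the passage to $\varphi'(0)=\left(E'(u_0),v\right)$, where one must correctly track the scaling of the little-$o$ remainder under $v\mapsto tv$. There is no genuine analytic obstacle here, as the statement is essentially just the directional-derivative form of Fermat's rule lifted to a Banach space.
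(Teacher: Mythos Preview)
Your argument is correct and is exactly the standard proof: restrict to a line through $u_0$, identify the one-variable derivative with the directional derivative $(E'(u_0),v)$ via the Fr\'echet expansion, and invoke the scalar Fermat theorem. The only remark is that the paper itself does not supply a proof of this lemma at all; it simply cites \citet[p.103, Proposition 7.2]{precup2002methods}, where the same reduction-to-one-variable argument is given. So there is nothing to compare against beyond noting that your proof matches the textbook source.
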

\begin{definition}[{\citet[p.105, Definition 7.4]{precup2002methods}}]\label{Dcoercive}
	A functional \(E:D\subset X\rightarrow\mathbb{R}\) defined on an unbounded set \(E\) is said to be \textit{coercive} if \(E(u)\rightarrow\infty\) as \(\lvert u\rvert\rightarrow\infty\). 
\end{definition}
\begin{definition}[{\citet[p.105, Definition 7.5]{precup2002methods}}]\label{Dconvex}
	Let \(D\) be a convex subset of the Banach space \(X\). A functional \(E:D\rightarrow\mathbb{R}\) is said to be \textit{convex} if
	\[E\left(u+t\left(v-u\right)\right)\leq E(u)+t\left(E(v)-E(u)\right)\]
	for all \(u,v\in D\), \(u\neq v\) and \(t\in(0,1)\). The functional \(E\) is said to be \textit{strictly convex} if strict inequality occurs. 
\end{definition}
Note that, if \(X\) is a Hilbert space and \(E(u)=\frac{1}{2}\lvert u\rvert^2\) as in Lemma \ref{Lfrechetsquare}, \(E\) is clearly coercive and strictly convex. 
\begin{lemma}[{\citet[p.106, Theorem 7.4]{precup2002methods}}]\label{Lglobalextremum}
	Let \(X\) be a reflexive Banach space and \(E:X\rightarrow\mathbb{R}\) be convex, coercive and Fr\'echet differentiable in \(X\). Then there exists \(u_0\in X\) with
	\[E(u_0)=\inf_{u\in X}E(u),\qquad E'(u_0)=0.\]
	If, in addition, \(E\) is strictly convex, then \(E\) has a unique critical point. 
\end{lemma}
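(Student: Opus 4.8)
The plan is to prove Lemma~\ref{Lglobalextremum} by the \emph{direct method of the calculus of variations}, recovering compactness from reflexivity of $X$ and lower semicontinuity from convexity of $E$. First I would note that Fr\'echet differentiability forces $E$ to be norm-continuous: by Definition~\ref{Dfrechet}, $E(u+v)-E(u)=(E'(u),v)+\omega(u,v)$ with $\omega(u,v)=o(\lvert v\rvert)$, so $E(u+v)\to E(u)$ as $v\to 0$. Since $E$ is real-valued, $m:=\inf_{u\in X}E(u)<\infty$; choose a minimising sequence $\{u_n\}_{n\geq1}$ with $E(u_n)\to m$. Coercivity (Definition~\ref{Dcoercive}) makes $\{u_n\}$ norm-bounded: if some subsequence had $\lvert u_{n_k}\rvert\to\infty$ we would get $E(u_{n_k})\to\infty$, contradicting $E(u_n)\to m<\infty$. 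Because $X$ is reflexive, every bounded sequence has a weakly convergent subsequence (Eberlein--\v{S}mulian), so after passing to a subsequence, $u_{n_k}\rightharpoonup u_0$ for some $u_0\in X$.

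The crux is weak lower semicontinuity: a convex, norm-continuous (hence norm-lower-semicontinuous) functional is weakly lower semicontinuous, since for each $t\in\mathbb{R}$ the sublevel set $\{u\in X:E(u)\leq t\}$ is convex and norm-closed, hence weakly closed by the Hahn--Banach separation theorem (equivalently, Mazur's lemma). Applying this along the weakly convergent subsequence gives $E(u_0)\leq\liminf_k E(u_{n_k})=m$, and since $m$ is the infimum we get $E(u_0)=m$ (in particular $m>-\infty$). Thus $u_0$ is a global minimiser of $E$, hence a point of local extremum, so Lemma~\ref{Llocalextremum} yields $E'(u_0)=0$, which proves the first assertion.

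For uniqueness under strict convexity, I would first record the first-order characterisation of convexity: dividing the inequality of Definition~\ref{Dconvex} by $t$ and letting $t\downarrow0$, the difference quotient converges to $(E'(u),v-u)$ by Fr\'echet differentiability, giving $E(v)\geq E(u)+(E'(u),v-u)$ for all $u,v\in X$. Hence every critical point $u_1$ (i.e.\ $E'(u_1)=0$) satisfies $E(v)\geq E(u_1)$ for all $v$, so is a global minimiser. But a strictly convex $E$ has at most one global minimiser: if $u_0\neq u_1$ both attained $m$, then Definition~\ref{Dconvex} with $t=\tfrac12$ would give $E\bigl(\tfrac12(u_0+u_1)\bigr)<\tfrac12E(u_0)+\tfrac12E(u_1)=m$, contradicting the definition of $m$. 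Therefore the critical point is unique (and coincides with the minimiser $u_0$).

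The real content, and the main obstacle, is the weak-topology functional analysis: using reflexivity to extract a weakly convergent subsequence of a bounded sequence (Eberlein--\v{S}mulian), and upgrading norm-closedness of convex sublevel sets to weak-closedness (Hahn--Banach separation / Mazur). The remaining pieces --- norm-continuity from Fr\'echet differentiability, boundedness of the minimising sequence via coercivity, and the strict-convexity uniqueness argument --- are routine.
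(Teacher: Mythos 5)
Your proof is correct: the direct method (coercivity gives a bounded minimising sequence, reflexivity gives a weakly convergent subsequence, convexity plus the norm-continuity implied by Fr\'echet differentiability gives weak lower semicontinuity via Mazur, and the first-order convexity inequality shows every critical point is a global minimiser, unique under strict convexity). The paper offers no proof of this lemma --- it is imported verbatim from \citet[p.106, Theorem 7.4]{precup2002methods} --- and your argument is precisely the standard one underlying that cited result, so there is nothing to compare beyond noting that you have supplied the proof the paper delegates to its reference.
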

We remark that Hilbert spaces are reflexive Banach spaces. 
\end{document}